\pdfoutput=1
\documentclass{article}

\usepackage[dblblindworkshop,final]{neurips_2026}
\workshoptitle{Interpretability as a Science}

\usepackage[utf8]{inputenc}
\usepackage[T1]{fontenc}
\usepackage{hyperref}
\usepackage{url}
\usepackage{booktabs}
\usepackage{amsfonts}
\usepackage{nicefrac}
\usepackage{microtype}
\usepackage{xcolor}
\usepackage{graphicx}
\usepackage{caption}
\usepackage{enumitem}
\usepackage{amsmath}
\usepackage{amsthm}
\usepackage{multirow}
\usepackage{fontawesome5}
\usepackage{array}

\graphicspath{{figs/}}
\newcommand{\pit}{\textsc{pit}}
\newcommand{\tpit}{t_{\textsc{pit}}}

\newtheorem{definition}{Definition}
\newtheorem{proposition}{Proposition}

\newtheorem{lemma}{Lemma}
\title{Effectiveness of Hidden-State Probes\\for Localizing Reasoning Failures}

\author{%
  Shubham Subhnil\thanks{Work done during an internship at IBM Research.}\\
  Trinity College Dublin, Ireland\\
  \texttt{subhnils@tcd.ie}
  \And
  Yannis Belkhiter\thanks{Also a PhD student at Trinity College Dublin, Ireland.}\\
  IBM Research, Dublin, Ireland\\
  \texttt{belkhity@tcd.ie}
  \And
  Erik Miehling\\
  IBM Research, Dublin, Ireland\\
  \texttt{Erik.Miehling@ibm.com}
  \AND
  Adam Dahlgren Lindstr\"{o}m\\
  Ume\aa{} University, Sweden\\
  \texttt{dali@cs.umu.se}
  \And
  Elizabeth M. Daly\\
  IBM Research, Dublin, Ireland\\
  \texttt{elizabeth.daly@ie.ibm.com}
}

\begin{document}

\maketitle

\begin{abstract}
Verifying the final answer of a long reasoning trace may show that the trace is wrong but it does not distinguish three cases. The problem may lie beyond the model's capability, another attempt would succeed, or one step in an otherwise recoverable trace made the trace fail. Standard hidden-state probes are trained on final-answer correctness, a label that conflates these cases. Monte Carlo rollouts distinguish the cases and locate the failing step at additional inference cost. We test the hidden states of 4B and 8B reasoning models for the failure type and the failing step and compare each probe with the strongest baseline that uses no hidden states. On incorrect 8B traces, the confidence probe, a calibrated correctness probe at a midpoint cutoff, misses 81\% of cases in which a single step makes the trace unrecoverable and 47\% of cases in which another attempt would succeed. A probe with a different training design over the same states reverses the ordering of these two kinds of failure, so either ordering can be read from those states. At both scales, no single tested probe locates the failing step more accurately than a rule based only on trace length, and at 4B a probe trained to predict that step fails to outperform this rule in any of 10 runs on identical traces. The confidence probe therefore indicates that an answer is wrong, but its failure-type ordering depends on its training design.
\end{abstract}

\begin{center}
\small \faGithub\ \href{https://github.com/SSubhnil/Reasoning-Failures}{\texttt{github.com/SSubhnil/Reasoning-Failures}}\\
\raisebox{-0.25\height}{\includegraphics[height=1.35em]{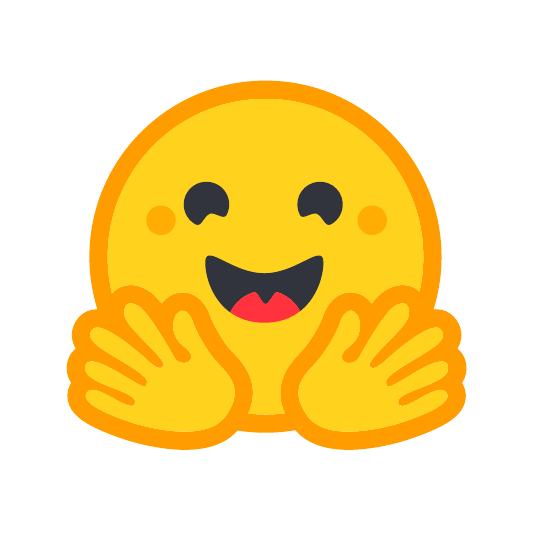}}\ \href{https://huggingface.co/datasets/ssubhnil/Reasoning-Failures}{\texttt{huggingface.co/datasets/ssubhnil/Reasoning-Failures}}
\end{center}

\section{Introduction}
\label{sec:intro}

Reasoning models can produce long traces that end in an incorrect answer \citep{guo2025deepseekr1,yang2025qwen3}. For these incorrect traces, the appropriate intervention depends on the cause of the failure. Some problems remain unsolved across many sampled attempts \citep{yue2025rlvrceiling}; others are solved on another attempt \citep{cotpassk2025}; and some traces remain recoverable until one step makes all later continuations fail \citep{hwang2024selfexplore}. A final label does not distinguish these failure types, and a model's stated confidence is an unreliable guide to correctness: instruction-tuned models asked for a confidence score often assign high scores to incorrect answers \citep{zhao2026overconfidence}. A monitor therefore needs to identify the failure type and, for a trace that becomes unrecoverable, the step at which recovery is lost.

A \emph{trace} is the reasoning the model generates in one attempt at a problem, and a \emph{prefix} is a trace up to a given step. A prefix is \emph{recoverable} if a correct answer remains reachable from it. Monte Carlo rollouts from each evaluated prefix estimate the probability that a continuation from it reaches a correct answer. An incorrect trace is termed a \emph{capability gap} if it is unrecoverable from the initial problem state, and a \emph{near miss} if it remains recoverable at every evaluated prefix. A \pit{} is an incorrect trace whose first unrecoverable prefix occurs partway through \citep[cf.][]{setlur2024rlincorrect}; the step that ends that prefix is its \emph{\pit{} step}. We test the ability of hidden-state probes to distinguish these failure types and locate the \pit{} step without additional rollouts at deployment time.

Hidden states carry information about the truth of a statement or an answer \citep{azaria2023saplma,marks2023geometryoftruth,kadavath2022pik}, and a probe on the hidden states of a language model can estimate the correctness of its reasoning steps \citep{reprobe2026}. Final labels record only final-answer correctness and do not directly supervise the failure type or the \pit{} step. We train probes on the hidden states at a fixed set of layers, the \emph{read site}, with three kinds of supervision: final labels; \emph{rollout rates}, the fraction of rollouts from a prefix that reach a correct answer; and the \pit{} step itself (Table~\ref{tab:probes}).

We evaluate Qwen3 in thinking mode at 4B and 8B on hard mathematics problems from DeepMath-103k \citep{he2025deepmath} and the training split of Hendrycks MATH \citep{hendrycks2021math}. We use this model family because it provides open-weight reasoning models at two scales with a thinking mode \citep{yang2025qwen3}. Appendix~\ref{app:proto} gives the difficulty levels, the sampling parameters, and the seed and interval conventions of each measurement.

\paragraph{Findings.}
\begin{enumerate}[leftmargin=*,itemsep=1pt,topsep=2pt]
\item \textbf{The confidence probe, trained on final labels, misses capability gaps and \pit{}s more often than near misses.} At the final step, the probe predicts the final label and remains calibrated at both model scales, but it misses $81\%$ of \pit{}s and $47\%$ of near misses at 8B; a trace is missed when its predicted correctness at the final step is at or above a \emph{cutoff} of $0.5$. After controls for four variables of the trace and the problem, the difference between \pit{}s and near misses exceeds the prespecified minimum effect of $0.08$ at 8B. At 4B the difference is nonzero and below the minimum effect (Section~\ref{sec:blind}).
\item \textbf{The combined probe reverses the failure-type ordering.} The combined probe, trained on rollout rates of incorrect traces, reads the same hidden states at the same read site. At 8B the confidence probe gives a randomly chosen \pit{} a higher final-step score than a randomly chosen near miss with probability $0.76$ (the AUROC of \pit{}s against near misses), and the combined probe does so with probability $0.40$. The two probes differ in training labels, attention mask, step weighting and training schedule, so the reversal cannot be attributed to the labels alone (Section~\ref{sec:blind}).
\item \textbf{At 8B, a failure-type classifier on transcoder activations meets the margin.} The failure-type classifier reads the transcoder activations of one completed incorrect trace per problem and separates problems labelled capability gap from problems labelled \pit{}. It exceeds the strongest tested baseline that uses no hidden states by $0.055$ AUROC at 8B and $0.049$ at 4B, so the 8B result meets the prespecified margin of $0.05$ and the 4B result does not (Section~\ref{sec:blind}).
\item \textbf{No single probe exceeds the length rule at locating the \pit{} step.} The length rule predicts a late step from trace length alone. At 4B, the pointer probe, trained to predict the \pit{} step, exceeds the length rule in 0 of 10 runs on identical held-out \pit{}s. The ensemble, the average of the five pointer probes, exceeds the length rule under one of the two weightings of its failure-type loss. At 8B only the value probe, trained on rollout rates of recoverable traces, was tested, and it is below the length rule in each of five training seeds (Section~\ref{sec:wall}).
\end{enumerate}

The localization finding is limited to the probe designs, training labels, read site, and the two collections of rollouts specified in Table~\ref{tab:probes} and Section~\ref{sec:failure}.

\section{Failure types}
\label{sec:failure}

We describe the two collections of rollouts that supply the labels, define the continuation value of a prefix and the best reachable value, use a threshold $\theta$ on the best reachable value to assign each incorrect trace a failure type and each \pit{} its \pit{} step, and state the evaluation protocol.

\paragraph{Collections of rollouts.}  Two collections of rollouts supply the labels. The \emph{broad collection} labels every incorrect trace at both scales. It evaluates steps 0, 1 and 2 and then every second step of a trace with 8 rollouts, and evaluates some of these prefixes with 16 rollouts in a refinement stage (Appendix~\ref{app:proto}). The \emph{gold collection} applies the same scan to a set of 4B traces with 4 rollouts and refines with 8, half as many as the broad collection. The \pit{} steps of its 818 \pit{}s are the \emph{reference \pit{} steps} for localization. Each comparison is scored on one collection, because the two collections use different numbers of rollouts. Table~\ref{tab:sets} gives the size of each population and the comparisons that use each population.

\begin{figure}[t]
\centering
\includegraphics[width=\textwidth]{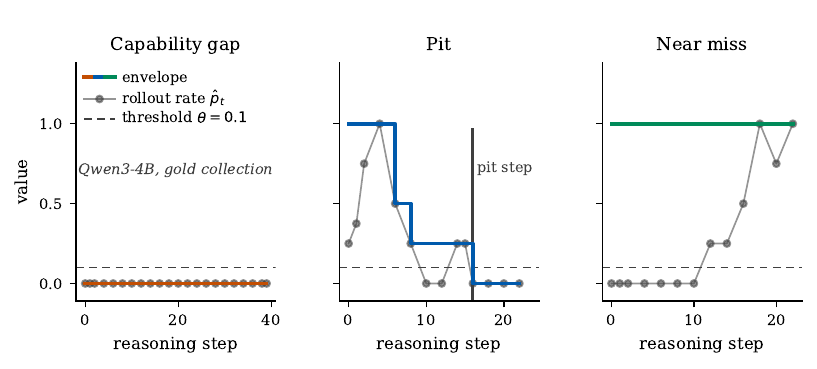}
\caption{The three failure types (Definition~\ref{def:pit}) on three Qwen3-4B traces from the gold collection. Each panel shows, for one trace, the rollout rates $\hat p_t$ at the evaluated prefixes and the envelope, which is the best reachable value $V^*_t$ at each step. For a capability gap, the envelope is at or below $\theta=0.1$ from step 0. For a \pit{}, the envelope first falls to $\theta$ or below partway through the trace, at the \pit{} step. For a near miss, the envelope stays above $\theta$ at every step.}
\label{fig:reach}
\end{figure}

\paragraph{Step-level value.}
We split each trace at blank lines and merge adjacent blocks until every step contains at least 400 tokens. For problem $\mathbf{x}$, model $\pi$ produces steps $\hat y_1,\dots,\hat y_T$, and the prefix $s_t=(\mathbf{x},\hat y_{1:t})$ contains the problem and the steps through $t$. The final label $Y\in\{0,1\}$ records the correctness of the answer generated after the reasoning. Following prior work on mathematical reasoning \citep{setlur2024rlincorrect,yu2024ovm,kazemnejad2024vineppo,wang2024mathshepherd}, we define the \emph{continuation value} $V^\pi_t$ of a prefix as the probability that a continuation sampled from it ends with a correct answer. We estimate it by the rollout rate $\hat p_t$, the fraction of $K$ rollouts from $s_t$ that end with a correct answer:
\begin{equation}
\label{eq:value}
V^\pi_t=\mathbb{E}_{\pi}\big[\,Y \mid s_t\,\big],
\qquad
\hat p_t=\frac{1}{K}\sum_{j=1}^{K} Y^{(j)} .
\end{equation}
We call predicting the final label \emph{detection}, estimating $V^\pi_t$ \emph{value estimation}, and locating the \pit{} step \emph{localization}.

\paragraph{Pit-step definition.} Process reward models trained on human step labels target local validity, the correctness of each step judged on its own \citep{lightman2023verify,uesato2022processoutcome}. Correct traces can contain locally flawed steps \citep{zheng2024processbench}, and in long traces that revise earlier work a local checker can flag a different step from the \pit{} step. We therefore define the \pit{} step from the continuation value in Equation~\ref{eq:value}.

\begin{definition}[Best reachable value and the \pit{} step]
\label{def:pit}
For an incorrect trace, the \emph{best reachable value} at step $t$ is $V^*_t = \max_{t'\geq t} V^\pi_{t'}$, which is non-increasing in $t$, and the \emph{envelope} of the trace is the sequence $V^*_0,\dots,V^*_T$. Step $0$ denotes the problem before any reasoning, with $s_0=(\mathbf{x})$. Let $\tpit$ be the first step at which the envelope is at or below the threshold $\theta$:
\begin{equation}
\tpit = \min\{\, t \in \{0,1,\dots,T\} : V^*_t \leq \theta \,\}, \qquad \theta = 0.1.
\end{equation}
An incorrect trace is a capability gap if $\tpit=0$, a \pit{} if $1\le\tpit\le T$, in which case $\tpit$ is its \pit{} step, and a near miss if the envelope stays above $\theta$ at every step, written $\tpit=\infty$. Failure types are assigned to incorrect traces only. The running maximum $V^*$ is defined along one sampled trace and differs from the optimal value of a policy.
\end{definition}

\noindent In the collections, the envelope is computed from the rollout rates $\hat p_t$ at the evaluated prefixes, with the maximum taken over the evaluated prefixes at or after step $t$, so the \pit{} step is always the last step of an evaluated prefix. $V^*_t$ gives the highest continuation value available at the current or any later step in the sampled trace. For example, a trace can establish a valid approach, introduce a sign error, and then develop the resulting wrong branch. Its envelope is high before the sign error and near zero afterward, so its \pit{} step is the step that introduces the sign error.

The threshold $\theta=0.1$ exceeds $1/16=0.0625$, the rollout rate of a single success among 16 rollouts, the largest $K$ used to assign labels. At a prefix where the envelope is at or below $\theta$, the rollout rate there and at every later evaluated prefix therefore reflects at most one success in 16 rollouts and no success in 8 or fewer. Self-Explore treats a prefix as unrecoverable only if no sampled continuation succeeds \citep{hwang2024selfexplore}; the threshold relaxes this test so that a single success in 16 rollouts does not make a prefix recoverable.

\begin{figure}[t]
\centering
\includegraphics[width=\textwidth]{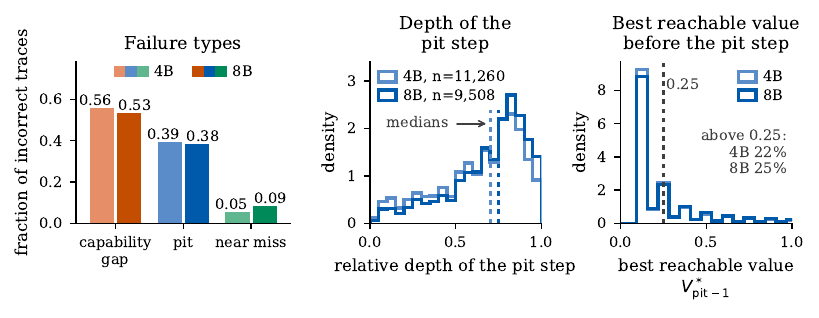}
\caption{Failure types and \pit{} steps in the broad collection at both model scales (Table~\ref{tab:sets}). \textbf{Left:} Shares of capability gaps, \pit{}s and near misses among incorrect traces. \textbf{Center:} The relative depth of the \pit{} step, the \pit{} step divided by the number of steps in the trace, with the medians marked. The late median depths of the \pit{} step motivate a baseline that predicts a late step. \textbf{Right:} The best reachable value immediately before the \pit{} step, $V^*_{\tpit-1}$, with the share above $0.25$ marked. $V^*_{\tpit-1}$ depends on later sampled states, so a probe that reads only the steps generated so far need not determine it (Section~\ref{sec:wall}).}
\label{fig:anat}
\end{figure}
\citet{setlur2024rlincorrect} select the first step with the lowest estimated value in an incorrect trace, $\arg\min_t \hat V_t$. With few rollouts per prefix, zero estimates are common and a zero estimate can be followed by a rate above $\theta$, so the $\arg\min$ step and the \pit{} step rarely coincide at either scale (Table~\ref{tab:support}); all \pit{} steps in this paper follow Definition~\ref{def:pit}.

A capability gap has no recoverable prefix; it is related to problems that remain unsolved across repeated attempts \citep{yue2025rlvrceiling} and to difficulty regions in which accuracy is at most 10 percent and does not rise with more sampled reasoning paths \citep{reasoningboundary2024}. A near miss remains recoverable at every evaluated prefix although the trace ends with a wrong answer; it is related to problems that are solved on another attempt \citep{cotpassk2025}. A \pit{} is recoverable before its \pit{} step and unrecoverable from it onward (Figure~\ref{fig:reach}). Near misses and \pit{}s are the incorrect traces whose prefix at step 0 is recoverable; we call them \emph{recoverable traces}.

Capability gaps are the most frequent failure type at both scales, followed by \pit{}s and near misses, and fewer than half of the incorrect traces are recoverable (Figure~\ref{fig:anat}). The shares of the failure types change by less than two percentage points for $\theta$ between $0.05$ and $0.10$, and raising $\theta$ to $0.15$ moves some \pit{}s to capability gaps (Table~\ref{tab:claims}). The shares depend weakly on problem type (Appendix~\ref{app:types}). The best reachable value immediately before the \pit{} step is usually low but has a right-skewed tail: some \pit{}s retain a best reachable value above $0.25$ until their \pit{} step (Figure~\ref{fig:anat}). \pit{} steps occur late in the trace, so localization is compared with the length rule, which predicts a late step (Section~\ref{sec:wall}).

The behavioral taxonomies reviewed in Section~\ref{sec:related} group near misses and \pit{}s together, as failures that resampling fixes. Capability gaps, near misses and \pit{}s are the evaluation targets of the remaining experiments.

\paragraph{Evaluation protocol.} The two scales share the problem set, but the traces, and therefore the incorrect traces and their failure types, differ between scales, so no comparison between scales is paired. Each point estimate has a 95\% bootstrap interval obtained by resampling problems. We report no detected difference when the interval contains zero. Two decision thresholds were fixed before the probes were evaluated. First, a probe counts as using information in the hidden states only if its AUROC exceeds that of the strongest tested baseline for its task that uses no hidden states by $0.05$, the \emph{margin}, with an interval of the difference that excludes zero. Second, the difference between \pit{}s and near misses after controls in Section~\ref{sec:blind} counts as an effect only above $0.08$, the \emph{minimum effect}.

\section{Confidence predictions by failure type}
\label{sec:blind}

A probe trained on hidden states and final labels needs no rollouts at deployment time. We train the confidence probe, validate its detection AUROC and calibration, and then evaluate its predictions at the final step.

\paragraph{Probe inputs.} The probes read the hidden states of the frozen model at the read site, on traces that are generated once and reused for every probe (Table~\ref{tab:probes}). For each step, a probe reads the hidden states at the read site at one token position, which lies at or before the last token of the step. For about half of the traces, the position read for the final step lies in an earlier step (Appendix~\ref{app:repr}). We selected the read site, layers $\{23,24\}$ at both scales, from the detection AUROC of a probe at each layer, using the final label only and no failure type or \pit{} step (Table~\ref{tab:support}; Appendix~\ref{app:repr}).

\paragraph{Confidence probe.} We call the probe trained on final labels the \emph{confidence probe}. Every confidence score in this paper is an output of the confidence probe. Our confidence probe is a causal transformer encoder over the steps of a trace. At step $t$ it reads the hidden states $h_1,\dots,h_t$ at the read site and outputs the predicted correctness
$c_t = \sigma(g(h_{1:t})) = \widehat{\Pr}(Y{=}1 \mid h_{1:t})$. We broadcast the trace's final label to every valid step and minimize per-step Bernoulli loss. The same final label supervises every prefix, so the objective estimates $\mathbb{E}[Y\mid h_{1:t}]$ \citep{yu2024ovm}, which equals $V^\pi_t$ only when $h_{1:t}$ retains all the information in the prefix about the final label (Proposition~\ref{prop:detect}, Section~\ref{sec:wall}). We calibrate held-out scores by temperature scaling \citep{guo2017calibration}. Training and validation traces are split by problem, so no problem contributes traces to both; we call this the \emph{problem split}. Appendix~\ref{app:proto} gives the architecture and training details and reports a split by trace for comparison.

On held-out problems, the confidence probe predicts the final label and is calibrated before temperature scaling at both scales (Table~\ref{tab:conf}). Trace length, answer form and problem difficulty also predict the final label, so a probe can score well without using information beyond these variables; format variables have been shown to account for probe accuracy on a different prediction target \citep{sahoo2026taskformat}. Table~\ref{tab:conf} therefore also reports the detection AUROC of the probe's score after five such variables are regressed out, and a variant of the probe trained with the Brier loss, which gives similar detection AUROC and calibration.

\paragraph{Text baseline.} Among the baselines that use no hidden states, the strongest we tested for detection is the \emph{text baseline}, a gradient-boosted classifier that predicts the final label from eleven features of the trace text, such as the frequency of verification and backtracking sentences, the position of the final answer, and the number and length of sentences (Appendix~\ref{app:proto}). A logistic regression on the same features has a lower detection AUROC, so a linear classifier understates the predictive power of these features (Table~\ref{tab:conf}). The confidence probe reaches a detection AUROC of $0.81$ at both scales, against $0.78$ at 4B and $0.77$ at 8B for the text baseline. The interval of the difference excludes zero and lies below the margin at both scales, so the confidence probe exceeds the text baseline without meeting the margin. Adding the confidence probe's score to the text baseline's features raises the detection AUROC, with an interval that excludes zero (Table~\ref{tab:conf}).

\paragraph{Failure-type pattern.} The confidence probe misses capability gaps and \pit{}s more often than near misses at both scales (Figure~\ref{fig:conceal}A; Table~\ref{tab:conceal}); a trace is missed when its predicted correctness at the final step is at least the cutoff of $0.5$. Overall calibration is compatible with this pattern by failure type, because incorrect traces are the minority of the calibration population.

\paragraph{Controlled comparison.} We test the hypothesis that the best reachable value at step 0, $V^*_0$, trace length, answer position and problem difficulty account for the higher miss rate of \pit{}s than of near misses. We first compare the two failure types within bins of $V^*_0$ (Figure~\ref{fig:conceal}B), and then fit a logistic model of trace misses on the failure type and the four variables. The \emph{difference between \pit{}s and near misses after controls} is the average marginal effect of changing a trace's failure type from near miss to \pit{} in this model (Appendix~\ref{app:proto}). It exceeds the minimum effect at 8B, at $+0.125\,[+0.104,+0.143]$. At 4B the difference is nonzero and below the minimum effect, at $+0.037\,[+0.012,+0.064]$ (Table~\ref{tab:conceal}). Capability gaps have $V^*_0\le\theta$ by definition and \pit{}s have $V^*_0>\theta$, so the two failure types occupy disjoint ranges of $V^*_0$, and their miss rates are compared without controls.

\paragraph{Combined probe.} We call the probe trained on rollout rates of incorrect traces the \emph{combined probe}. It reads the confidence probe's input, the hidden states at the read site, and keeps the final label as the target on correct traces. At the final step, the confidence probe gives \pit{}s higher scores than near misses at both scales, and the combined probe gives \pit{}s lower scores than near misses (Figure~\ref{fig:conceal}C); each probe is scored on its own held-out traces. The two probes differ in training labels, attention mask, step weighting and training schedule (Table~\ref{tab:probes}; Appendix~\ref{app:proto}), so this reversal cannot be attributed to the labels alone.

\paragraph{Failure-type classifier.} The confidence probe's miss rates for capability gaps and \pit{}s differ by less than two percentage points at both scales (Table~\ref{tab:conceal}), so we test the separation of the two failure types from a completed incorrect trace. A transcoder is a fixed sparse approximation of an MLP sublayer \citep{dunefsky2024transcoders,paulo2025transcoders}. The \emph{failure-type classifier} differs from the confidence probe in input and in scoring problems instead of traces: it reads the transcoder activations at layers 19 to 24 of one completed incorrect trace per problem and separates problems labelled capability gap from problems labelled \pit{} (Appendix~\ref{app:presence}). It exceeds the strongest tested baseline that uses no hidden states by $0.055$ AUROC at 8B and $0.049$ at 4B. The margin is applied to the mean over six repetitions of five-fold cross-validation on the same problems, so the 8B result meets the margin and the 4B result does not. At both scales the interval of the difference excludes zero and contains the margin in each of the six repetitions. The same classifier on the hidden states at these layers scores lower at both scales (Appendix~\ref{app:presence}). Our intuition is that the width of the transcoder input, 40 to 64 times that of the hidden states, accounts for part of this gap; a random encoding of equal width was not tested. The failure-type classifier reads a completed trace and predicts a label of its problem, so it applies only after a trace has ended incorrectly. Appendix~\ref{app:presence} reports the baselines, the six repetitions, and the differences after removing problems whose labels may be defective.

\paragraph{Predicted correctness at the \pit{} step.} At both scales, the confidence probe's predicted correctness is lower at the \pit{} step than at the other steps of the trace. The \emph{position-matched steps} of a trace are its other steps whose scaled position $t/(L-1)$, for step $t$ of a trace with $L$ steps, lies within $0.15$ of the \pit{} step's. Against these steps the decrease remains at 8B. At 4B, where the comparison uses about a tenth as many \pit{}s, there is no detected difference (Table~\ref{tab:support}).

\begin{figure}[t]
\centering
\includegraphics[width=\textwidth]{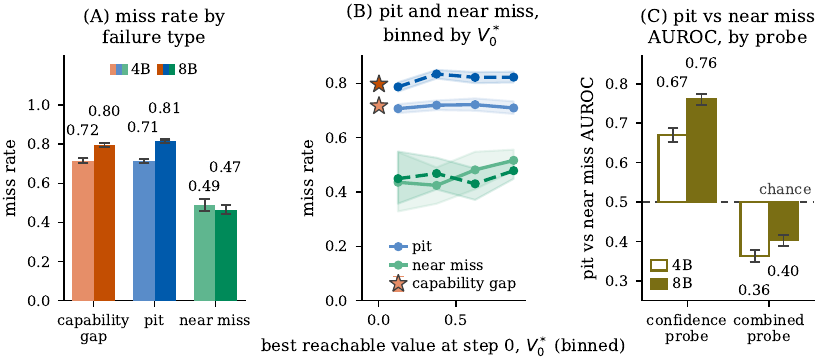}
\caption{Confidence-probe predictions at the final step, on the broad collection at both model scales, with training and validation split by problem. \textbf{(A)} The miss rates for capability gaps, \pit{}s, and near misses. \textbf{(B)} The miss rates of \pit{}s and near misses within four bins of the best reachable value at step 0, $V^*_0$, at 4B (solid) and 8B (dashed); the star marks the miss rate of capability gaps at their mean $V^*_0$. The incorrect traces differ between scales, so no comparison between scales is paired. Bands and bars are 95\% problem-clustered bootstrap intervals at both scales. \textbf{(C)} AUROC of \pit{}s against near misses: the confidence probe ranks \pit{}s above near misses and the combined probe ranks \pit{}s below near misses, each by its final-step score on its own held-out traces. Table~\ref{tab:conceal} gives the values, intervals and trace counts.}
\label{fig:conceal}
\end{figure}

\section{Locating the \pit{} step}
\label{sec:wall}

We compare probes that locate the \pit{} step with the strongest tested baselines that use no hidden states.

\paragraph{Identifiability.} A training signal identifies a quantity when the population optimum of its objective determines that quantity. The \pit{} step is defined through $V^*_t$, a maximum over current and later prefix values. At generation time a probe observes only the prefix, so the later values can remain random conditional on its input.

\begin{proposition}[Population optimum of outcome supervision]
\label{prop:detect}
Let $N$ be the number of valid steps in a trace and assume $0<\mathbb E_\pi[N]<\infty$. Under a strictly proper per-step Bernoulli objective with final label $Y$ broadcast to every valid step, the population risk minimizer over measurable $[0,1]$-valued functions of the hidden prefix is $g^\star(h_{1:t})=\mathbb{E}_{\mathbb{Q}}[Y\mid h_{1:t}]$. Here $\mathbb{Q}$ is the step-weighted training law, which counts each trace once per valid step. Moreover, $g^\star(h_{1:t})=\mathbb{E}_{\mathbb{Q}}[V^\pi_t\mid h_{1:t}]$, and it equals $V^\pi_t$ under representation sufficiency.
\end{proposition}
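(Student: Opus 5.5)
The argument has three parts: construct the step-weighted law $\mathbb{Q}$ and rewrite the training risk under it, invoke strict propriety of the Bernoulli loss pointwise, and apply the tower property.

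First, the population objective is
\[
R(g)=\mathbb{E}_\pi\Bigl[\sum_{t=1}^{N}\ell\bigl(g(h_{1:t}),Y\bigr)\Bigr],
\]
where $\ell$ is the strictly proper per-step loss. Define $\mathbb{Q}$ on pairs $(\text{trace},t)$ by
\[
\mathbb{Q}(A)=\mathbb{E}_\pi\Bigl[\sum_{t=1}^{N}\mathbf 1\{(\text{trace},t)\in A\}\Bigr]\Big/\mathbb{E}_\pi[N].
\]
The assumption $0<\mathbb{E}_\pi[N]<\infty$ is used exactly here: it makes $\mathbb{Q}$ a probability measure. Then $R(g)=\mathbb{E}_\pi[N]\,\mathbb{E}_{\mathbb{Q}}[\ell(g(h_{1:t}),Y)]$, so minimizing $R$ is equivalent to minimizing the $\mathbb{Q}$-risk over measurable $g:\mathcal H\to[0,1]$.

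Second, condition on $h_{1:t}$ under $\mathbb{Q}$. For any fixed $q=\mathbb{E}_{\mathbb{Q}}[Y\mid h_{1:t}]\in[0,1]$, strict propriety says that $c\mapsto q\,\ell(c,1)+(1-q)\,\ell(c,0)$ has the unique minimizer $c=q$. Hence $\mathbb{E}_{\mathbb{Q}}[\ell(g,Y)\mid h_{1:t}]\ge \mathbb{E}_{\mathbb{Q}}[\ell(q,Y)\mid h_{1:t}]$ pointwise, with equality iff $g(h_{1:t})=q$. Taking expectations shows that $g^\star$ is a minimizer. Uniqueness holds $\mathbb{Q}$-a.s., provided the risk of $g^\star$ is finite (for log loss this is automatic, since the conditional entropy is at most $\log 2$). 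The one technical point is that $h_{1:t}$ has variable length, so $g$ should be read as a function on the disjoint union of prefix spaces, with $t$ included in the conditioning variable. I will state this explicitly.

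Third, the identity $g^\star=\mathbb{E}_{\mathbb{Q}}[V^\pi_t\mid h_{1:t}]$ follows from the tower property, which holds if $\mathbb{E}_{\mathbb{Q}}[Y\mid s_t,t]=V^\pi_t$. Under $\mathbb{Q}$, a pair is drawn with weight $\mathbf 1\{t\le N\}$. The event $\{t\le N\}$ is determined by the prefix $s_t$, since the prefix itself shows whether step $t$ exists. So conditional on $s_t$ and $t$, the continuation is still distributed as under $\pi$, and $\mathbb{E}_{\mathbb{Q}}[Y\mid s_t,t]=\mathbb{E}_\pi[Y\mid s_t]=V^\pi_t$. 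I expect this step to be the main obstacle, because length weighting could in principle tilt the continuation law. The argument is that the reweighting depends only on $N$, and the tower property is applied along a filtration in which $h_{1:t}$ is a function of $s_t$ (the frozen model is deterministic on the prefix). The point to check carefully is whether "valid step" depends only on the prefix. If it depends on the total length $N$ beyond $t$, the claim needs the weaker reading in which $V^\pi_t$ is replaced by its length-weighted analogue. I will state the prefix-measurability of validity as the operative assumption.

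Finally, representation sufficiency means that $Y$ is conditionally independent of $s_t$ given $h_{1:t}$ under $\mathbb{Q}$, equivalently $\mathbb{E}_{\mathbb{Q}}[Y\mid h_{1:t}]=\mathbb{E}_{\mathbb{Q}}[Y\mid s_t]$. Under this condition $g^\star=V^\pi_t$ $\mathbb{Q}$-a.s. Since $V^*_t$ involves later prefixes, the last step also shows that it is not in general a function of $h_{1:t}$.
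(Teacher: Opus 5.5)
Your proposal is correct and follows the paper's proof essentially step for step: you pass to the step-weighted law $\mathbb{Q}$, minimize the conditional risk pointwise in $h$ by strict propriety, and use that the reweighting is fixed by the prefix, which carries its step index, so that $\mathbb{E}_{\mathbb{Q}}[Y\mid s_t]=V^\pi_t$; you then apply the tower property over $\sigma(h_{1:t})\subseteq\sigma(s_t)$. Your explicit normalization via $0<\mathbb{E}_\pi[N]<\infty$, the finiteness caveat for almost-sure uniqueness, and naming prefix-measurability of step validity as the operative assumption are slightly more careful than the paper, which asserts that last point without flagging it.
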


\begin{proposition}[Exact recovery from a prefix]
\label{thm:nonident}
Under the completed-incorrect-trace law, fix step $t$, condition on $T\ge t$, and let $s_t$ denote the prefix. A measurable function $f$ with $f(s_t)=V^*_t$ almost surely exists if and only if $\mathrm{Var}(V^*_t\mid s_t)=0$ almost surely. No function of $s_t$ returns the realized $V^*_t$ exactly where that variance is positive, regardless of training label or loss. $\tpit$ is a function of $s_t$ if and only if its conditional law given $s_t$ is a point mass. For $m_t=V^*_t-V^\pi_t$, $\mathrm{Var}(m_t\mid s_t)=\mathrm{Var}(V^*_t\mid s_t)$.
\end{proposition}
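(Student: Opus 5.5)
The plan is to reduce every claim of Proposition~\ref{thm:nonident} to one elementary fact. For a square-integrable random variable $Z$ and a $\sigma$-algebra $\mathcal G$, a $\mathcal G$-measurable $W$ with $W=Z$ almost surely exists if and only if $\mathrm{Var}(Z\mid\mathcal G)=0$ almost surely. We work on the completed-incorrect-trace law, conditioned on the event $\{T\ge t\}$, and take $\mathcal G=\sigma(s_t)$. By Doob--Dynkin, $\sigma(s_t)$-measurable random variables are exactly the variables $f(s_t)$ with $f$ measurable. Since $V^*_t\in[0,1]$, it is square-integrable, and conditional variances are well defined.

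\textbf{The fact, forward direction.} Suppose $f(s_t)=V^*_t$ almost surely. Then $V^*_t$ is (a.s. equal to) a $\sigma(s_t)$-measurable variable. Hence $\mathbb E[V^*_t\mid s_t]=V^*_t$ almost surely, and so $\mathrm{Var}(V^*_t\mid s_t)=\mathbb E[(V^*_t-V^*_t)^2\mid s_t]=0$.

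\textbf{The fact, converse direction.} Suppose $\mathrm{Var}(V^*_t\mid s_t)=0$ almost surely. Take expectations of the conditional variance to get $\mathbb E[(V^*_t-\mathbb E[V^*_t\mid s_t])^2]=0$. Therefore $V^*_t=\mathbb E[V^*_t\mid s_t]=:f(s_t)$ almost surely, with $f$ a measurable version of the conditional expectation.

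\textbf{The ``no function'' sentence.} This is the contrapositive, localized to the event where the variance is positive. Suppose $f(s_t)=V^*_t$ held almost surely on $A=\{\mathrm{Var}(V^*_t\mid s_t)>0\}$, which is $\sigma(s_t)$-measurable. Then multiplying by $\mathbf 1_A$ and repeating the forward computation gives $\mathbf 1_A\mathrm{Var}(V^*_t\mid s_t)=0$ almost surely. This contradicts $\Pr(A)>0$. The argument never mentions the training label or the loss; the only input is that the predictor is $\sigma(s_t)$-measurable. That is why the claim holds regardless of either.

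\textbf{The $\tpit$ statement.} Apply the same fact to $\tpit$, a random variable valued in $\{0,\dots,T\}\cup\{\infty\}$. Because it can take the value $\infty$, variance is not the right tool. Instead I use regular conditional distributions, which exist because the state space is countable and $s_t$ lives in a standard Borel space. The claim is that $\tpit=f(s_t)$ almost surely if and only if the regular conditional law of $\tpit$ given $s_t$ is a Dirac mass $\delta_{f(s_t)}$ almost surely.
\begin{itemize}
\item Forward: if $\tpit=f(s_t)$, the conditional law is $\delta_{f(s_t)}$.
\item Converse: if the conditional law is almost surely $\delta_{\phi(s_t)}$, define $\phi(s_t)$ as the unique atom. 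This map is measurable, being the atom $k$ with $\Pr(\tpit=k\mid s_t)=1$. Then $\Pr(\tpit\ne\phi(s_t))=\mathbb E[\Pr(\tpit\ne\phi(s_t)\mid s_t)]=0$.
\end{itemize}

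\textbf{The variance identity.} The key observation is that $V^\pi_t=\mathbb E_\pi[Y\mid s_t]$ is by definition a function of $s_t$. It is therefore a constant under conditioning on $s_t$, so $\mathrm{Var}(V^*_t-V^\pi_t\mid s_t)=\mathrm{Var}(V^*_t\mid s_t)$. This holds by the shift invariance of conditional variance, with the cross-covariance term vanishing.

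\textbf{Main obstacle.} The step I expect to need care is the change of law. The original $V^\pi_t$ is defined under the generation law, while the proposition is stated under the completed-incorrect-trace law conditioned on $T\ge t$. I must check that $V^\pi_t$ and the later $V^\pi_{t'}$ remain measurable functions of the corresponding prefixes $s_{t'}$. This holds because they are fixed functions of prefixes regardless of the sampling law. I must also check that conditioning on an event of positive probability preserves the existence of regular conditional laws. Once that is settled, every part of the proposition follows from the elementary fact above.
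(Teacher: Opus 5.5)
Your proof is correct, and its $\tpit$ and $m_t$ parts coincide with the paper's. The difference lies in how you establish the first equivalence and the \emph{no function} sentence.

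The paper derives both directions from the $L^2$ projection identity $\mathbb{E}[(V^*_t-f(s_t))^2]=\mathbb{E}[\mathrm{Var}(V^*_t\mid s_t)]+\mathbb{E}[(\mathbb{E}[V^*_t\mid s_t]-f(s_t))^2]$, which holds for every $\sigma(s_t)$-measurable $f$. It then reads \emph{regardless of training label or loss} off the first term: that term is an irreducible mean-squared error shared by every prefix predictor, and a fortiori by every hidden-state predictor, since $\sigma(h_{1:t})\subseteq\sigma(s_t)$.

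You instead get the forward direction directly from measurability ($\mathbb{E}[V^*_t\mid s_t]=V^*_t$) and the converse from the tower property. You prove the \emph{no function} sentence by a contradiction localized to the $\sigma(s_t)$-measurable set $A=\{\mathrm{Var}(V^*_t\mid s_t)>0\}$.

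Your route is a little more elementary. It proves the localized statement the proposition literally asserts, and applying it to any positive-probability $\sigma(s_t)$-measurable subset of $A$ gives a pointwise version. The paper's identity instead buys a quantitative lower bound on the error of any trained predictor, which is how the paper links the proposition to probes.

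The paper also uses the fact that prefixes take countably many values, so all conditioning is on atoms and Doob--Dynkin and regular conditional laws are unnecessary. Your argument does not need countability of $s_t$. Finally, your explicit check that each $V^\pi_{t'}$ remains a fixed function of its prefix under the completed-incorrect-trace law is a point the paper's proof leaves implicit.
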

\noindent A hidden-state probe is no stronger than a probe of $s_t$, since $h_{1:t}$ is a deterministic function of $s_t$. The claim concerns exact recovery of the realized quantity; a probe may still estimate its conditional mean. Proposition~\ref{thm:nonident} applies only where $\mathrm{Var}(V^*_t\mid s_t)>0$. We do not measure this variance, and $0<V^\pi_t<1$ does not determine it, so Proposition~\ref{thm:nonident} does not explain the localization results below. The values along a completed trace determine $\tpit$ (Lemma~\ref{lem:posthoc}); the probes below estimate those values or predict the \pit{} step directly. Both propositions are proved in Appendix~\ref{app:proofs}.

\paragraph{Length rule.} The \emph{length rule} predicts the \pit{} step $t=\mathrm{round}(f\,(L-1))$ of a trace with $L$ steps, with one parameter $f$. \pit{} steps occur late in a trace: the median relative depth of the \pit{} step is $0.70$ at 4B and $0.75$ at 8B (Figure~\ref{fig:anat}). We score the probes and the length rule by Hit@$\pm2$, the fraction of \pit{}s for which the predicted step lies within two steps of the \pit{} step. In each comparison, $f$ is fitted to maximize Hit@$\pm2$ on \pit{}s of problems outside the evaluation set. The length rule is therefore refitted for each evaluation set, because its score depends on that set's trace lengths and on the relative depths of its \pit{} steps (Appendix~\ref{app:proto}).

\paragraph{Pointer probe.} We call the probe trained to predict the \pit{} step the \emph{pointer probe}. It differs from the confidence probe in input and target: it reads a completed trace, which Proposition~\ref{thm:nonident}, a statement about prefixes, does not cover, and is trained on the \pit{} step itself. The pointer probe is a transformer encoder with bidirectional attention over the steps of a completed trace. At each step it reads that step's hidden states at the read site together with those of the final step, and it outputs a distribution over the steps of the trace, whose most probable step is its prediction. The distribution is trained on the \pit{} steps of the broad collection's \pit{}s. It is trained jointly on three further outputs, among them the failure type, which is trained on all incorrect traces with one of two weightings of its loss, inverse-frequency or uniform (Appendix~\ref{app:proto}); only the step distribution is evaluated.

\paragraph{Paired localization results.} At 4B, the pointer probe is trained on the broad-collection traces of 80\% of the problems and evaluated on the $2{,}289$ \pit{}s of the remaining problems; the same problem split is used for every seed and both weightings of the failure-type loss. The pointer probe exceeds the length rule in 0 of 10 runs on identical held-out \pit{}s (Table~\ref{tab:paired}). Each comparison is paired on the same \pit{}s, with intervals that resample problems. In a second comparison, on the reference \pit{} steps of the gold collection, a pointer probe trained without any problem that contributes a gold trace does not exceed the length rule in any of five seeds. The length rule in this comparison is fitted on broad-collection \pit{}s that exclude the gold problems (Table~\ref{tab:paired}).

An untrained \emph{distance score}, the distance of each step's stored hidden state from the pooled step states of correct traces of the same problem, has an AUROC for the \pit{} step near $0.5$ at every layer tested on 76 \pit{} traces at 4B (Figure~\ref{fig:wall}, right). At the read site it is below $t/(L-1)$ used as a score (Table~\ref{tab:support}).

The pointer probe and the length rule disagree on about a third of the held-out \pit{}s in every run, about equally in each direction (Table~\ref{tab:support}). Averaging the pointer probes across seeds, separately for each weighting of the failure-type loss, gives the \emph{ensemble}. With inverse-frequency weights, the ensemble exceeds the length rule with an interval of the difference that excludes zero; with uniform weights, the interval of the difference contains zero (Table~\ref{tab:paired}).

\paragraph{Value probe.} We call the probe trained on rollout rates of recoverable traces the \emph{value probe}; it has the combined probe's design, restricted to recoverable traces (Appendix~\ref{app:proto}). Its predicted \pit{} step is the \pit{} step of Definition~\ref{def:pit} computed from its predicted values at the evaluated prefixes in place of the rollout rates. The value probe's prediction at the last evaluated prefix separates held-out near misses from \pit{}s with an AUROC of $0.91$ at both scales (Table~\ref{tab:support}). The comparison of near misses with \pit{}s was not run for a baseline that uses no hidden states. By Definition~\ref{def:pit}, a near miss has a rollout rate above $\theta$ at its last evaluated prefix and a \pit{} does not, so the value probe's training target itself separates the two failure types and this AUROC measures the probe's fit to that target. The value probe's per-step mean absolute error exceeds the error from predicting zero in every seed, so rollout supervision does not yield accurate per-step values in this experiment.

\paragraph{Localization at 8B.} At 8B, where no gold collection exists, the value probe is the only probe tested for localization. In each seed it is scored on the \pit{} steps of the broad collection for that seed's held-out problems, on the same \pit{}s as the length rule. The comparison is through the five seed means, without a paired interval, and the value probe is below the length rule in every seed with both input sets (Table~\ref{tab:support}).

\paragraph{Final-label comparison.} We compare the value probe with the \emph{final-label probe}, a transformer encoder over the steps trained on final labels with a causal attention mask, a value output and a confidence output; its predicted \pit{} step is the step with the largest drop in its value output from the preceding step (Appendix~\ref{app:proto}). The two probes differ in training label, attention mask, output head and step rule, so the comparison changes the label together with these design choices. Each probe is run with five seeds and two input sets, hidden states alone and hidden states with five next-token statistics, on the reference \pit{} steps. The value probe is trained on broad-collection traces, which include most gold \pit{} traces, so we evaluate each seed on the reference \pit{} steps of the problems held out from the value probe's training. The final-label probe's training traces are not recorded, so some of these \pit{}s may be among them. On these \pit{} steps, both probes are below the length rule in every seed with both input sets (Table~\ref{tab:paired}). The value probe's advantage over the final-label probe has an interval that excludes zero in one of the five seeds with hidden states alone and in none with hidden states with the next-token statistics (Table~\ref{tab:support}). The value probe therefore gives no consistent improvement over the final-label probe on these \pit{} steps, and neither probe exceeds the length rule on these \pit{} steps or in a pooled evaluation over all reference \pit{} steps (Appendix~\ref{app:proto}).

\begin{figure}[t]
\centering
\includegraphics[width=\textwidth]{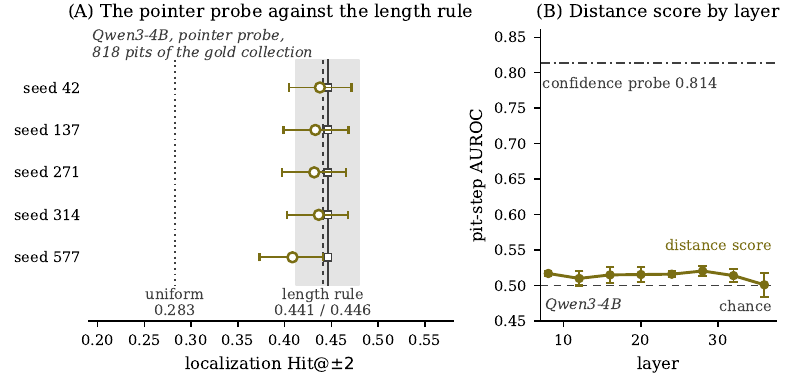}
\caption{Localization at 4B. \textbf{Left:} Hit@$\pm2$ on the reference \pit{} steps for the five pointer probes trained without any problem that contributes a gold trace. Vertical lines mark the uniform-step baseline (dotted, no interval) and two fits of the length rule: the fit on broad-collection \pit{}s outside the gold problems (solid, interval shaded), with which each pointer probe is paired, and the fit by five-fold cross-validation within the gold collection (dashed). Bars and the shaded band are 95\% bootstrap intervals. \textbf{Right:} AUROC of the distance score for the \pit{} step against the other steps of the trace, by layer, on 76 \pit{} traces. Points are midpoints and bars the range over four distance measures, not confidence intervals. The horizontal line is the detection AUROC of the confidence probe, a different task, shown for reference.}
\label{fig:wall}
\end{figure}

\paragraph{Rollout search.} The \pit{} step can also be located with rollouts instead of a probe \citep{luo2024omegaprm}. We replay such a search offline on the stored rollout rates of the gold collection: it reads every second step of a trace and then bisects the interval that contains the first prefix at which the envelope is at or below $\theta$. The search finds the reference \pit{} step within two steps for $80.4\%$ of the gold \pit{}s while reading $6.6$ evaluated prefixes per trace (Table~\ref{tab:support}). The search reads the rollout rates that define the reference \pit{} steps, so its agreement with them is a different quantity from a probe's Hit@$\pm2$ and is reported separately.

\section{Discussion and limitations}
\label{sec:discuss}

The confidence probe misses capability gaps and \pit{}s more often than near misses at both scales (Figure~\ref{fig:conceal}). At detection it exceeds the strongest baseline we tested that uses no hidden states, but it does not meet the margin (Section~\ref{sec:blind}). The text baseline's miss rates by failure type were not measured, so the pattern is established for the confidence probe only.

Either ordering of \pit{}s and near misses can be read from the hidden states that the confidence probe reads. The combined probe reads the same hidden states and gives \pit{}s lower final-step scores than near misses at both scales, the opposite of the confidence probe (Figure~\ref{fig:conceal}C). We think the training target itself produces the combined probe's order: by Definition~\ref{def:pit} a near miss keeps a rollout rate above $\theta$ at its last evaluated prefix and a \pit{} does not, so a probe fitted to these rates is trained toward ranking near misses above \pit{}s. The confidence probe and the combined probe differ in more than their labels, and we did not isolate the contribution of the labels (Section~\ref{sec:blind}).

At 4B, no single probe exceeds the length rule at locating the \pit{} step on held-out \pit{}s, and at 8B the one tested probe, the value probe, is below the length rule in every seed (Section~\ref{sec:wall}). The hidden states may still carry the \pit{} step; the tested probes do not recover it. Two limits bear directly on the localization result. First, each step's hidden states were read at a position at or before the step's last token, and for about half of the traces the position read for the final step lies in an earlier step (Section~\ref{sec:blind}). No result was re-scored with corrected positions, so the localization result holds for these stored hidden states. Second, the \pit{} steps are assigned from a small number of rollouts at each evaluated step, and the two collections assign different failure types to some traces they share (Appendix~\ref{app:proto}). The highest score that any predictor could reach against these labels is therefore unknown. We think the second limit dominates: the ensemble exceeds the length rule under one weighting, so the pointer probes read some information beyond trace length, and \pit{} steps assigned from 4 to 16 rollouts at every second step may be too noisy for that information to raise Hit@$\pm2$.

All results in this paper come from one model family, Qwen3 in thinking mode at two scales, on hard mathematics problems. The failure types depend on the rollout protocol, including the number of rollouts, the threshold and the token limits (Appendix~\ref{app:proto}). Under this protocol, a capability gap records only that no sampled continuation reached a correct answer. A rollout counts as successful when its final answer is correct, so a continuation that reaches the answer through an invalid argument counts as a success. Such cases are reported for base models \citep{cotpassk2025}. The final labels contain grading errors, and correcting them leaves capability gaps and \pit{}s missed more often than near misses at both scales (Appendix~\ref{app:proto}).

\section{Related work}
\label{sec:related}

\paragraph{Correctness probes.}
Probes on hidden states predict the truth of a statement or an answer \citep{azaria2023saplma,marks2023geometryoftruth,kadavath2022pik,orgad2024llmsknowmore}, and on reasoning models the correctness of an intermediate answer \citep{zhang2025reasoningknowright}. \citet{orgad2024llmsknowmore} also define error types by resampling the answers to a question and show that probes predict them; their types are defined per question from sampled final answers, and ours per trace from rollouts at each step. A related study trains correctness probes on instruction-tuned models and one distilled reasoning model \citep{diagnosticnotcausal2026}; the signal is decodable in all of these models. Steering along the probe direction in two instruction-tuned models does not raise accuracy consistently, which the authors read as a diagnostic signal. The internal signal is useful because verbal reports can differ from the model's computation \citep{turpin2023unfaithful,zhao2026overconfidence}. This paper measures the failure types that a correctness probe distinguishes.

\paragraph{Failure taxonomies.}
Uncertainty work commonly separates epistemic from aleatoric uncertainty \citep{ahdritz2024epistemic,abbasi2024believe} or data from model uncertainty \citep{uabench2026}, and a recent account separates input ambiguity, knowledge gaps and decoding randomness \citep{anatomyuncertainty2026}; decoding randomness is closest in definition to our near misses. Behavioral work separates failures that resampling fixes from failures it does not \citep{islah2026fixable}, and a similar division is hypothesized for self-correction \citep{errordepth2026}. Under our taxonomy, the failures that resampling fixes combine near misses with \pit{}s. Concurrent work predicts, from the output log-probabilities of failed rollouts, which intervention rescues a problem \citep{islah2026fixable}.

\paragraph{Pit-step targets.}
Process benchmarks label the first locally invalid step \citep{zheng2024processbench} or every erroneous section of a long trace \citep{he2025deltabench}, which differs from the \pit{} step, and annotators can disagree about which step is wrong \citep{jacovi2024reveal}. State geometry can identify a first local error \citep{reasoningbreak2026}, and attribution graphs can recover local validity \citep{zhao2025crv}. Both targets differ from the \pit{} step, which depends on the model's continuations from each prefix, so an oracle for local validity need not locate it. \citet{cui2025prime} and \citet{yuan2024implicitprm} infer step credit from final outcomes, which is a different target from the best reachable value (Proposition~\ref{prop:detect}). \citet{pointofnoreturn2026} locate commitment junctures by counterfactual resampling in strategic-deception settings. Among the definitions reviewed here, that counterfactual procedure is closest to the definition used in this study. Appendix~\ref{app:claims} compares each claim with the nearest prior result.

\section{Conclusion}

We assigned each incorrect trace of a reasoning model one of three failure types from rollouts and tested the recovery of these types and of the \pit{} step from the model's hidden states. The confidence probe predicts the final label, and at the cutoff it misses capability gaps and \pit{}s more often than near misses. The combined probe ranks \pit{}s and near misses in the opposite order to the confidence probe, so an ordering of failure types read from a probe depends on the probe's training design. The failure-type classifier, which reads transcoder activations of a completed incorrect trace, separates capability gaps from \pit{}s. It meets the margin at 8B and not at 4B. At 4B no single probe exceeds the length rule at locating the \pit{} step, and the ensemble exceeds it under one of the two weightings of its failure-type loss. At 8B only the value probe was tested, and it is below the length rule in every seed. The localization results concern the tested probes and read site; a probe that locates the \pit{} step from the hidden states remains possible.

\begin{ack}
This work was funded in part by the EU Horizon project ELIAS (\#101120237). Views and opinions expressed are those of the author(s) only and do not necessarily reflect those of the European Union or The European Research Executive Agency. This publication has partly emanated from research conducted with the financial support of Taighde \'Eireann -- Research Ireland under Frontiers for the Future grant number 21/FFP-A/8957 and grant number 18/CRT/6223. For the purpose of Open Access, the author has applied a CC BY public copyright licence to any Author Accepted Manuscript version arising from this submission. This work was partially supported by the Knut and Alice Wallenberg Foundation. Experiments were run on the Cognitive Compute Cluster (CCC) at IBM Research.
\end{ack}

\bibliographystyle{plainnat}
\bibliography{ref}

\newpage
\appendix

\section{Claims, evidence, and prior work}
\label{app:claims}
Table~\ref{tab:claims} states each claim at the scope supported by the experiments, reports the decisive measurements, records failed and inconclusive comparisons, and identifies the nearest prior work. The detailed tables provide sample sizes, seed axes, and confidence intervals. A probe counts as using information in the hidden states only when it meets the margin defined in Section~\ref{sec:failure}. Differences below the margin are recorded as failures even when their intervals exclude zero.
\begin{table}[htbp]
\scriptsize
\centering
\setlength{\tabcolsep}{3pt}
\begin{tabular}{>{\raggedright\arraybackslash}p{2.5cm}>{\raggedright\arraybackslash}p{3.5cm}>{\raggedright\arraybackslash}p{2.4cm}>{\raggedright\arraybackslash}p{2.0cm}>{\raggedright\arraybackslash}p{2.6cm}}
\toprule
\textbf{Claim} & \textbf{Supporting evidence} & \textbf{Verdict} & \textbf{Nearest prior work} &
\textbf{Contribution} \\
\midrule
Best reachable value partitions incorrect traces into capability gaps, \pit{}s, and near misses &
$0.56/0.39/0.05$ (4B) and $0.53/0.38/0.09$ (8B); recoverable $44.5$ and $46.8\%$; the shares change by less than two percentage points for
$\theta$ between $0.05$ and $0.10$, while $\theta{=}0.15$ raises the share of capability gaps to $0.68$ and $0.65$ &
Descriptive result under one decoding setting &
\citep{hwang2024selfexplore,setlur2024rlincorrect,errordepth2026,ahdritz2024epistemic} &
Separates \pit{}s from near misses among recoverable traces \\
The confidence probe predicts final correctness and remains calibrated &
Detection AUROC $0.814$ and $0.809$ over 5 seeds; after controls $0.729$ and $0.722$; ECE $0.014$/$0.024$
(Table~\ref{tab:conf}); the text baseline reaches $0.7783$ at 4B and $0.7725$ at 8B &
Supported for prediction and calibration; the probe exceeds the text baseline by less than the margin &
\citep{azaria2023saplma,kadavath2022pik,yuan2024implicitprm,diagnosticnotcausal2026} &
A step-level probe with a causal attention mask and surface-feature controls \\
Miss rates at the cutoff differ by failure type &
Miss rates $0.717/0.714/0.490$ (4B) and $0.797/0.815/0.466$ (8B); difference between \pit{}s and near misses after controls $+0.037$ and
$+0.125$ (Table~\ref{tab:conceal}) &
Supported at 8B; \textbf{partial at 4B}, where the difference between \pit{}s and near misses after controls is nonzero, its interval $[+0.012,+0.064]$ does not reach the minimum effect, and the confidence probe trained with the Brier loss gives $+0.052$ &
\citep{diagnosticnotcausal2026} &
Breaks down misses by failure type \\
Probes with different training designs recover opposite failure-type orderings &
AUROC of \pit{}s against near misses: confidence probe $0.67/0.76$, combined probe $0.36/0.40$ (4B/8B), each on its own held-out traces &
Opposite orderings under partially matched training designs; the contribution of the target alone is open &
\citep{yu2024ovm,lu2024autopsv} &
Compares two training designs on the same hidden states at the same read site \\
A failure-type classifier on transcoder activations of one completed incorrect trace per problem separates capability-gap problems from \pit{} problems &
Difference from the strongest tested baseline that uses no hidden states $+0.055$ (8B) and $+0.049$ (4B), mean over six repetitions of five-fold cross-validation; every interval excludes zero and contains the margin; the same classifier on hidden states scores lower (Appendix~\ref{app:presence}) &
Meets the margin at 8B, not at 4B; a difference between the scales is not established; applies after a trace has ended &
\citep{orgad2024llmsknowmore,stopbeforefail2026,kantamneni2025sparseprobing} &
Separates capability gaps from \pit{}s; the confidence probe's miss rates do not separate these two types \\
No single tested probe exceeds the length rule at locating the \pit{} step; at 8B only the value probe was tested &
Pointer probe $0.463$ versus length rule $0.469$, above the length rule in $0$ of $10$ runs; pointer probe trained without gold problems $0.429$ versus $0.446$, $0$ of $5$; ensemble $+0.026$ $[+0.002,+0.050]$ with inverse-frequency weights on the failure-type loss, interval containing zero with uniform weights; final-label comparison: the value probe and the final-label probe are below the length rule in every seed on the problems held out from the value probe's training; pooled over all reference \pit{} steps, which include training traces, the two probes score above the length rule in $0$ of $20$ runs (Table~\ref{tab:paired}); 8B value probe with the two input sets $0.429$ and $0.405$ versus length rule $0.503$, above the length rule in $0$ of $5$ seeds for each input set (Table~\ref{tab:support}) &
Negative result at 4B, paired on identical \pit{}s, and at 8B, compared through seed means against the \pit{} steps of the broad collection. Earlier configurations evaluated on $55$ to $134$ \pit{}s are inconclusive &
\citep{zheng2024processbench,reasoningbreak2026,zhao2025crv,cui2025prime,nadaf2026steerable} &
Uses a continuation-based \pit{}-step target and refits the length rule for each evaluation set \\
At 4B, a rollout search replayed on the gold collection's stored estimates reproduces the reference \pit{} steps &
Within two steps for $80.4\%$ of the gold \pit{}s while reading $6.6$ evaluated prefixes per trace; refinement-depth agreement $0.754$ &
Supported at 4B in an offline replay against the estimates that define the reference \pit{} steps; not comparable with a probe's Hit@$\pm2$; no interval &
\citep{luo2024omegaprm} &
Measures the share of the collection's evaluated prefixes that the search reads \\
\bottomrule
\end{tabular}
\caption{Claims, supporting evidence, verdicts, nearest prior work, and contributions. Among the reviewed studies, none combines a three-way recovery taxonomy, probes on hidden states, and a continuation-based \pit{}-step target.}
\label{tab:claims}
\end{table}

\section{Experimental protocols}
\label{app:proto}

\paragraph{Problem set and sampling.}
The two scales share one problem set of $35{,}854$ problems. Of these problems, $32{,}031$ come from DeepMath-103k at difficulty at least 7, and $3{,}823$ come from levels 4 and 5 of the training split of Hendrycks MATH. Each problem has four sampled traces, except 146 problems with eight, which gives $144{,}000$ traces at each scale. The traces were sampled at temperature $0.7$ with a 16k-token budget.

No problem in the problem set matches a MATH-500, MATH-Perturb or AIME 2024--2025 problem after whitespace and case normalization (0 of $35{,}854$ problems). The exact-match check does not detect paraphrased or reformatted duplicates.

\paragraph{Monte Carlo collections.}
The two collections use different numbers of rollouts, and every result identifies its source collection (Table~\ref{tab:collections}). The broad collection labels all incorrect traces and supplies the failure-type shares in Section~\ref{sec:failure}. In a scan, it evaluates steps 0, 1 and 2 and then every second step with 8 rollouts, and in a refinement it re-evaluates some of these prefixes with 16 rollouts. More than 80\% of its evaluated prefixes have 8 rollouts, and more than 70\% of its \pit{} steps were evaluated with 16 (Table~\ref{tab:collections}). About half of the 16-rollout prefixes of \pit{}s lie within two evaluated steps of the \pit{} step. The collection evaluates about 60\% of the steps of a trace.

Each set of $K$ rollouts contains one greedy rollout and $K-1$ independent samples at temperature $0.7$, so its success count is not binomial. Writing $G_t\in\{0,1\}$ for the greedy result and $p_t$ for the temperature-$0.7$ success probability, the count is $G_t+X_t$, where $X_t\sim\mathrm{Bin}(K-1,p_t)$. Thus $\mathbb{E}[\hat p_t\mid s_t]=[G_t+(K-1)p_t]/K$, a $K$-dependent protocol mean rather than $p_t$. Rollout rates at $K=8$ and $K=16$ therefore differ in mixture weight as well as sampling variance under the recorded protocol. Conditional on the greedy outcome, the remaining $K-1$ samples give rate standard deviations of $0.099$ at $K{=}8$ and $0.073$ at $K{=}16$ near $p{=}0.1$. A depth sweep reports refinement-depth agreement of $0.754$ at $K{=}16$ against $0.638$ at $K{=}8$. The two agreement values are point estimates against a deeper reference, with no interval.

The envelope is computed over the prefixes evaluated by a collection (Section~\ref{sec:failure}). Rollouts continue from the final recorded prefix, so the value of that prefix is estimated from rollouts and is not set equal to the final label.

The rollout search, replayed offline on the gold collection's stored rollout rates, reads every second step and then bisects the interval that contains the first prefix at which the envelope is at or below $\theta$. No 8B replay exists.

Each rollout is capped at $4{,}096$ tokens, a cap read from the stored rollouts at both scales. The temperature of $0.7$ is taken from our experiment records. The temperature, nucleus-sampling parameter, random seed and stop strings cannot be recovered from the stored rollouts, and the script that generated the rollouts was not retained.

\begin{table}[htbp]
\centering\small
\caption{The two collections of rollouts. A dash marks a quantity that was not computed for the gold collection. The refinement re-evaluates some prefixes with more rollouts than the scan.}
\label{tab:collections}
\begin{tabular}{@{}p{6.4cm}ccc@{}}
\toprule
& \multicolumn{2}{c}{broad collection} & gold collection \\
\cmidrule(lr){2-3}\cmidrule(l){4-4}
& 4B & 8B & 4B \\
\midrule
evaluated steps & \multicolumn{3}{c}{0, 1, 2, then every second step} \\
rollouts per prefix in the scan & 8 & 8 & 4 \\
rollouts per prefix in the refinement & 16 & 16 & 8 \\
evaluated prefixes with the scan's rollouts & $82.3\%$ & $83.3\%$ & $81.4\%$ \\
evaluated prefixes with the refinement's rollouts & $17.7\%$ & $16.7\%$ & $18.6\%$ \\
\pit{} steps evaluated with the refinement's rollouts & $8{,}270$ & $6{,}895$ & 448 \\
\pit{} steps evaluated with the scan's rollouts & $2{,}990$ & $2{,}613$ & 370 \\
mean share of a trace's steps evaluated & $0.620$ & $0.612$ & $0.623$ \\
cap on each rollout, tokens & $4{,}096$ & $4{,}096$ & --- \\
\bottomrule
\end{tabular}
\end{table}

The gold collection contains $5{,}380$ Qwen3-4B traces and supplies 818 \pit{}s, whose \pit{} steps are the reference \pit{} steps. It scans with 4 rollouts and refines with 8 (Table~\ref{tab:collections}), so no step in it receives the 16 rollouts of the broad collection's refinement. At $K{=}4$ and $K{=}8$, a rollout rate at or below $\theta$ means zero successes.

The gold collection overlaps the broad collection. Of its $5{,}380$ traces, $2{,}760$ are also broad-collection traces, including 815 of the 818 gold \pit{} traces. For 84 of these 815 traces, the broad collection assigns a different failure type: 76 capability gaps and 8 near misses.

This overlap restricts the traces that can be evaluated at 4B. For the pointer probe evaluated on the reference \pit{} steps, we exclude every trace from every problem that contributes a gold \pit{}. The value probe is trained on broad-collection traces, which include 815 of the gold \pit{} traces. For a given seed, 528 to 565 of the gold \pit{}s are therefore training traces of the value probe. The final-label probe's training traces and problem split are not recorded. Section~\ref{sec:wall} reports an evaluation of each seed on the problems held out from the value probe's training, with 205 to 242 \pit{}s per seed. Pooled over all reference \pit{} steps, about two thirds of each seed's evaluations are on training traces of the value probe. In this pooled evaluation the value probe scores higher than the final-label probe, and both probes are below the length rule in all 20 runs (two probes, two input sets, five seeds; Table~\ref{tab:paired}).

At 8B, where no gold collection exists, the value probe is scored on the \pit{} steps of the broad collection within each seed's held-out problems, $2{,}805$ to $2{,}918$ per seed. The length rule is refitted on each seed's training problems. The 8B comparison uses the five seed means and reports sample standard deviations over seeds in place of a bootstrap interval on the difference.

\paragraph{Label corrections.}
The stored final labels mark some correct answers as incorrect. The mislabelled answers are yes/no answers given as their true/false synonyms, and answers to true-or-false questions whose reference answer is coded as a letter. The reported numbers use the stored labels, and the probes are trained on them.

Correcting these labels changes the 8B capability-gap miss rate under the problem split, from $0.797$ to $0.779$, which is the largest change among the reported quantities. The 8B \pit{} miss rate moves from $0.815$ to $0.800$. The shares of the failure types move by at most $0.006$, detection AUROC by at most $+0.004$, and the difference between \pit{}s and near misses after controls by at most $+0.005$.

The stored labels also mark as incorrect the answers that the grader could not parse. There are $2{,}853$ such capability-gap and \pit{} traces at 4B and $2{,}994$ at 8B. Removing these unparsed traces leaves capability gaps and \pit{}s missed more often than near misses at both scales.

Removing instead every problem whose reference answer is a yes/no answer or a letter removes 47\% of the incorrect 4B traces. On the remaining traces, the interval of the 4B difference between \pit{}s and near misses after controls on the problem split is $[+0.009,+0.085]$, which contains the minimum effect. On all traces the interval is $[+0.012,+0.064]$, which lies below the minimum effect.

\paragraph{Seeds and intervals.}
The confidence probe, combined probe, value probe, and pointer probe each use five seeds, with new weights for each seed and a new problem split for each seed, except the pointer probe, whose problem split is the same for every seed. The final-label probe uses the same five seeds; its problem split is not recorded. The value probe, the final-label probe, and the pointer probe use the common set $\{42,137,271,314,577\}$. The combined probe also uses the common set; the confidence probe uses seeds $0$ to $4$. Read-site selection uses bootstrap intervals without a seed axis.

The comparison of the confidence probe's predicted correctness at the \pit{} step with other steps uses, for each trace, only the seeds that held it out. The failure-type classifier uses six repetitions of five-fold cross-validation grouped by problem, with the same folds for the classifier and every baseline; the margin is applied to the mean difference over the six repetitions, and each repetition has its own bootstrap interval (Appendix~\ref{app:presence}). Every bootstrap resamples problems and uses between $1{,}000$ and $2{,}000$ replicates, except the read-site intervals, which use $1{,}000$ resamples of held-out traces, and the pooled differences of the final-label comparison in Table~\ref{tab:support}, which use $10{,}000$ resamples of traces.

\paragraph{Residualization.}
For Section~\ref{sec:blind}, we fit a logistic model of trace misses on failure type, trace length, answer position, difficulty, and best reachable value at step 0, $V^*_0$. The difference between \pit{}s and near misses after controls is the average marginal effect obtained by changing the failure type from near miss to \pit{} while holding each other variable at its observed value, then averaging over traces. For the detection AUROC after controls, we regress trace length, difficulty, problem source, answer length, and numeric-answer status out of the confidence probe's log odds. We score the residual on held-out traces and average over five seeds. As a stratified check, we also compare the miss rates of \pit{}s and near misses within cells of $V^*_0$ bin by trace-length tertile that hold at least 30 near misses. The \pit{} miss rate minus the near-miss miss rate is positive in all five such cells at 4B and in seven of eight at 8B, where the exception holds 33 near misses.

\begin{table}[htbp]
\centering\small
\setlength{\tabcolsep}{4pt}
\caption{Training configuration of every probe and of the failure-type classifier. The four probes read hidden states at the read site (Appendix~\ref{app:repr}); the failure-type classifier reads transcoder activations at layers 19 to 24 (Appendix~\ref{app:presence}). The pointer probe was trained at 4B only.}
\label{tab:probes}
\begin{tabular}{@{}>{\raggedright\arraybackslash}p{2.2cm}>{\raggedright\arraybackslash}p{3.0cm}>{\raggedright\arraybackslash}p{3.1cm}>{\raggedright\arraybackslash}p{2.0cm}l@{}}
\toprule
model & target & training traces & mask & scales \\
\midrule
confidence probe & final label $Y$, broadcast to every step & all $144{,}000$ traces & causal & 4B, 8B \\
value probe & rollout rate $\hat p_t$ at each step & recoverable traces & bidirectional & 4B, 8B \\
combined probe & $Y$ on correct traces, $\hat p_t$ on incorrect ones & $143{,}979$ traces (21 incorrect traces without rollout rates dropped) & bidirectional & 4B, 8B \\
pointer probe & \pit{} step; further outputs, including the failure type & $22{,}877$ incorrect traces of the broad collection, of which $8{,}971$ \pit{}s & bidirectional & 4B \\
failure-type classifier & failure type of the problem, capability gap or \pit{} & one scored trace per problem & --- & 4B, 8B \\
\bottomrule
\end{tabular}
\end{table}

\paragraph{Confidence probe.}
The confidence probe is a causal transformer encoder over the steps of a trace, with a learned positional embedding and a two-layer sigmoid head. The two scales use the same architecture and training settings and differ only in input width and number of parameters (Table~\ref{tab:confprobe}). For each step, the input concatenates the hidden states of the two read-site layers (Appendix~\ref{app:repr}) at one token position.

The objective broadcasts the trace's final label $Y$ to every valid step and minimizes per-step Bernoulli loss, averaged over the valid steps in the batch:
\begin{equation}
\mathcal{L}_{\text{conf}} = -\frac{1}{\sum_i T_i}\sum_i\sum_{t=1}^{T_i} \big[\, Y_i \log c_{i,t} + (1-Y_i)\log(1-c_{i,t})\,\big],
\qquad c_{i,t} = \sigma(g(h_{i,1:t})).
\end{equation}
Here $h_{i,1:t}$ is the sequence of hidden states read at the read site for steps 1 to $t$ of trace $i$, $g$ is the encoder's scalar logit output, and $\sigma$ is the logistic function. A Brier-loss robustness variant remains within the BCE variant's seed spread on every detection and calibration measurement (Table~\ref{tab:conf}). The experiment establishes no difference between the two objectives, which were not compared pairwise.

Each model trains on all traces with the problem split, which holds out about 30\% of the traces, so no problem contributes traces to both training and validation. We fit one temperature on held-out logits. Expected calibration error uses ten equal-width bins and is reported before temperature scaling, which reduces it (Table~\ref{tab:conf}).

As a leakage comparison, we also train with a 70/30 split by trace, stratified to preserve the base correctness rate, which we call the \emph{trace split}. Under that split, $97.6\%$ of validation traces share a problem with at least one training trace. Table~\ref{tab:tracesplit} gives the trace-split results; the 4B difference between \pit{}s and near misses after controls is $+0.085$ under the trace split and $+0.037$ under the problem split. The trace split also gives 4B \pit{}s a higher miss rate than capability gaps ($0.742$ against $0.717$, Table~\ref{tab:tracesplit}); under the problem split the \pit{} miss rate minus the capability-gap miss rate is $-0.003\,[-0.018,+0.012]$.

\begin{table}[htbp]
\centering\small
\caption{Architecture and training settings of the confidence probe. A value spanning both columns applies at both scales.}
\label{tab:confprobe}
\begin{tabular}{@{}p{6.0cm}cc@{}}
\toprule
& 4B & 8B \\
\midrule
pre-norm encoder layers & \multicolumn{2}{c}{4} \\
width & \multicolumn{2}{c}{512} \\
attention heads & \multicolumn{2}{c}{8} \\
maximum number of steps & \multicolumn{2}{c}{48} \\
input dimension per step & $5{,}120$ & $8{,}192$ \\
parameters & 15.5M & 17.1M \\
optimizer and learning rate & \multicolumn{2}{c}{AdamW, $10^{-4}$} \\
batch size & \multicolumn{2}{c}{64} \\
maximum epochs, with early stopping & \multicolumn{2}{c}{20} \\
numeric precision & \multicolumn{2}{c}{\texttt{bf16}} \\
\bottomrule
\end{tabular}
\end{table}

\begin{table}[htbp]
\centering\small
\caption{The confidence probe under the trace split, the leakage comparison. Table~\ref{tab:conf} and Table~\ref{tab:conceal} give the same quantities under the problem split.}
\label{tab:tracesplit}
\begin{tabular}{@{}p{6.0cm}cc@{}}
\toprule
& 4B & 8B \\
\midrule
detection AUROC & $0.8165$ & $0.8165$ \\
AUROC after controls & $0.7322$ & $0.7267$ \\
ECE & $0.027$ & $0.019$ \\
miss rate, capability gap & $0.717$ & $0.795$ \\
miss rate, \pit{} & $0.742$ & $0.823$ \\
miss rate, near miss & $0.469$ & $0.522$ \\
\pit{}s $-$ near misses, after controls & \mbox{$+0.085\,[+0.061,+0.110]$} & \mbox{$+0.119\,[+0.099,+0.137]$} \\
\bottomrule
\end{tabular}
\end{table}

\paragraph{Text baseline.}
The text baseline is a gradient-boosted model on eleven features of the trace text: the densities of sentence roles such as verification, derivation and backtracking, assigned by cue-word lists, statistics of the transitions between roles, the position of the final answer, and the number and mean length of sentences. On the problem split, a logistic regression on the same features reaches a detection AUROC of $0.7548$ at 4B and $0.7379$ at 8B. Adding the confidence probe's score to the text baseline's features raises the detection AUROC by $+0.034$ to $+0.041$ across the two scales and the two training losses, with intervals that exclude zero.

\paragraph{Combined probe.}
Table~\ref{tab:probes} sets out the target, training traces, mask and model scales of every probe. The combined probe, compared with the confidence probe in Figure~\ref{fig:conceal}C, has the confidence probe's architecture, input and optimizer (Table~\ref{tab:confprobe}). Its attention mask is bidirectional. It trains on correct and incorrect traces, taking the final label as the target at every step of a correct trace and the rollout rate $\hat p_t$ as the target at each evaluated prefix of an incorrect trace; the other steps of an incorrect trace carry no loss. It weights each evaluated prefix by its rollout count and each step of a correct trace by one, and the confidence probe weights every step equally. It trains for a fixed 8 epochs, and the confidence probe trains for at most 20 epochs with early stopping. The combined probe has a lower detection AUROC than the confidence probe on held-out problems at both scales (Table~\ref{tab:support}). Training the confidence probe's design on the combined probe's labels may help isolate the contribution of the labels; we did not run this experiment.

\subsection{Failure-type classifier}
\label{app:presence}

\paragraph{Input and population.} The failure-type classifier reads the transcoder activations at the input of the MLP block of each of layers 19 to 24, from the publicly released transcoders for both models\footnote{\texttt{mwhanna/qwen3-4b-transcoders} and \texttt{mwhanna/qwen3-8b-transcoders} on Hugging Face.}. For each problem it takes one completed incorrect trace, the one with the lowest trace index, which we call the scored trace.

Each latent's rectified activation is summed over the trace's tokens, at most 16{,}384, and binarized at that latent's 90th percentile over the problems of the training folds. The binarized vector is projected to 256 dimensions by truncated singular value decomposition and passed to a class-balanced logistic regression. \citet{gallifant2025saefeat} sum each sparse-autoencoder latent over the tokens of a text and binarize the sum at a fixed threshold before a logistic regression. The failure-type classifier uses the same summation and binarization with a threshold set for each latent.

The label of a problem is its failure type in the broad collection, capability gap or \pit{}. At 4B the population is a class-balanced sample of 1{,}722 problems within the difficulty range 7.0 to 9.5: 839 capability gaps and 883 \pit{}s. At 8B it is all 6{,}372 such problems: 2{,}749 capability gaps and 3{,}623 \pit{}s.

The scored trace is not always the trace that set the problem's label. At 4B it differs from that trace for 971 of the 1{,}722 problems. The scored trace is not a verified failure for 13.6\% of problems at 4B and 20.3\% at 8B, because of the grading defects described under Label corrections.

\paragraph{Baselines and evaluation.} The baselines use no hidden states. They are built from six feature sets: the question text with problem difficulty, trace counts and lengths, the question text with difficulty and the counts and lengths combined, the text of the scored trace, five statistics of the model's own token log-probabilities on that trace, and the first five sets together, each fitted by logistic regression and by gradient-boosted trees. The strongest of these twelve baselines is taken in each repetition. Every model uses the same five folds grouped by problem, and the five-fold cross-validation is repeated with six seeds; the margin is applied to the mean over the six repetitions of the difference from the strongest baseline, and each repetition has a problem-clustered bootstrap interval. The same pipeline applied to the rectified hidden states at layers 19 to 24, summed over tokens, gives the comparison with hidden states; the transcoder input has $983{,}040$ dimensions against $15{,}360$ (4B) and $24{,}576$ (8B) for the hidden states, and no random encoding of equal width was tested. Reordering the rows of identical inputs moves a baseline's AUROC by about $0.01$. The text baselines and the comparison with hidden states address two results on probes of sparse features. Between 45 and 90 percent of the sparse-autoencoder features that activate more on reasoning text than on other text also activate when a few associated tokens are inserted into non-reasoning text \citep{ma2026saereasonfeat}. Probes on sparse-autoencoder features do not improve on logistic regression on the dense activations, on average over 113 classification datasets \citep{kantamneni2025sparseprobing}. A linear probe on the hidden state of the last prompt token separates competition problems that a reasoning model fails under greedy decoding from those it solves, before the trace begins \citep{stopbeforefail2026}. A problem that one greedy run fails can be a capability gap, a near miss or a \pit{} under the failure types defined here.

\paragraph{Results and label sensitivity.} With the stored labels, the failure-type classifier meets the margin at 8B and not at 4B (Table~\ref{tab:presence}). The difference from the strongest baseline is $+0.055$ at 8B, with every repetition at or above the margin. It is $+0.049$ at 4B, with three of six repetitions at or above the margin. Every interval excludes zero and contains the margin (Table~\ref{tab:presence-rep}). The same classifier on hidden states at the same layers scores lower at both scales. No probe in Table~\ref{tab:probes} is trained on this label.

Removing problems whose scored trace may carry a wrong label raises the difference at both scales. The removal covers every problem whose scored trace carries a grading defect and, at 4B, every problem whose scored trace has a different failure type from its label. The removal leaves 1{,}294 problems at 4B and 5{,}078 at 8B and raises the difference to $+0.065$ and $+0.083$. With these problems removed, every repetition is at or above the margin at both scales.

The removed problems are 62\% (4B) and 65\% (8B) capability gaps, and the baselines lose more AUROC on the remaining problems than the classifier does. The rise in the difference is consistent with two explanations that this analysis does not separate: grading defects that the question text identifies, and a subset of capability gaps that the question text identifies.

The transcoders reconstruct the MLP outputs at layers 19 to 24 with a fraction of variance explained, pooled over tokens, of 0.338 to 0.525 at 4B and 0.366 to 0.564 at 8B.

\begin{table}[htbp]
\centering\small
\setlength{\tabcolsep}{4pt}
\caption{Failure-type classifier on one completed incorrect trace per problem, capability-gap problems against \pit{} problems. AUROCs are means over six repetitions of five-fold cross-validation; the difference is the classifier's AUROC minus that of the strongest baseline that uses no hidden states, mean $\pm$ standard deviation over repetitions. The hidden-state column is the same classifier on hidden states at the same layers. The last two rows remove the problems whose scored trace carries a grading defect and, at 4B, those whose scored trace has a different failure type from its label.}
\label{tab:presence}
\setlength{\tabcolsep}{3pt}
\begin{tabular}{llcccccc}
\toprule
labels & scale & problems & classifier & \shortstack{hidden\\states} & \shortstack{strongest\\baseline} & difference & \shortstack{repetitions\\$\geq 0.05$} \\
\midrule
stored & 4B & 1{,}722 & 0.667 & 0.580 & 0.618 & $+0.0488 \pm 0.0071$ & 3 of 6 \\
stored & 8B & 6{,}372 & 0.721 & 0.646 & 0.666 & $+0.0549 \pm 0.0026$ & 6 of 6 \\
defects removed & 4B & 1{,}294 & 0.643 & 0.581 & 0.578 & $+0.0648 \pm 0.0074$ & 6 of 6 \\
defects removed & 8B & 5{,}078 & 0.700 & 0.638 & 0.617 & $+0.0830 \pm 0.0049$ & 6 of 6 \\
\bottomrule
\end{tabular}
\end{table}

\begin{table}[htbp]
\centering\small
\setlength{\tabcolsep}{5pt}
\caption{Difference between the failure-type classifier and the strongest baseline that uses no hidden states, per repetition of five-fold cross-validation, with stored labels. Each cell is the AUROC difference with its 95\% problem-clustered bootstrap interval.}
\label{tab:presence-rep}
\begin{tabular}{lcc}
\toprule
repetition & 4B & 8B \\
\midrule
1 & $+0.0364$ [$+0.0040$, $+0.0690$] & $+0.0581$ [$+0.0468$, $+0.0698$] \\
2 & $+0.0558$ [$+0.0278$, $+0.0845$] & $+0.0550$ [$+0.0426$, $+0.0667$] \\
3 & $+0.0505$ [$+0.0218$, $+0.0814$] & $+0.0537$ [$+0.0413$, $+0.0671$] \\
4 & $+0.0491$ [$+0.0189$, $+0.0794$] & $+0.0553$ [$+0.0425$, $+0.0675$] \\
5 & $+0.0463$ [$+0.0158$, $+0.0797$] & $+0.0506$ [$+0.0377$, $+0.0637$] \\
6 & $+0.0547$ [$+0.0222$, $+0.0879$] & $+0.0568$ [$+0.0442$, $+0.0686$] \\
\bottomrule
\end{tabular}
\end{table}

\subsection{Localization}

\paragraph{Localization baselines.}
The length rule predicts $t=\mathrm{round}(f(L-1))$, and we fit the single parameter $f$ on training traces by maximizing Hit@$\pm2$ and score the length rule out of sample; Hit@$\pm2$ is also the evaluation metric of the probes. The median of $t/(L-1)$ at the \pit{} step $t$, the length rule's scale, is $0.75$ at 4B and $0.79$ at 8B. Figure~\ref{fig:wall} shows two fits on the reference \pit{} steps, one by five-fold cross-validation within the gold collection and one on broad-collection \pit{}s outside the gold problems; the Hit@$\pm2$ of the two fits differs by $0.005$. The uniform-step baseline, a step drawn uniformly from the trace, is computed in closed form for each evaluation set.

\paragraph{Pointer probe.}
The pointer probe predicts the \pit{} step of a \pit{} trace directly. It is a four-layer transformer encoder with bidirectional attention over the steps; at each step it reads that step's hidden states at the read site together with those of the final step, and it assigns a softmax score to every step. Training minimizes the cross-entropy to the labelled \pit{} step smoothed by a Gaussian of width one step, and inference selects the most probable step. The encoder has three further outputs trained jointly, a three-way failure type, a per-step estimate of the envelope and a confidence output, with loss weights 1, 0.25 and 0.5 against 1 for the step distribution; only the step distribution is evaluated in this paper. All 22,877 incorrect training traces (8,971 \pit{}s, 12,692 capability gaps and 1,214 near misses) supervise the failure-type and envelope outputs, and only the \pit{} traces supervise the step distribution. The two weightings of the failure-type loss are inverse-frequency weights (0.85 for \pit{}s, 0.60 for capability gaps, 6.28 for near misses) and uniform weights; the step loss is the same under both. A held-out \pit{} trace counts as a hit when the most probable step lies within two steps of the labelled \pit{} step; held-out traces of the other failure types are not scored for localization. The run that excludes the gold problems trains on 17,143 traces.

\paragraph{Value probe.}
The value probe uses the combined probe's design, trains only on recoverable traces, and predicts the raw rollout rate $\hat p_t$. A second input set appends to each step's hidden states five statistics of the model's next-token distribution at the step's read position: the entropy, the varentropy, the probability of the most probable token, the difference between the probabilities of the two most probable tokens, and the probability mass of the five most probable tokens. With rollout count $K_t$ at step $t$, the value probe minimizes count-weighted Bernoulli cross entropy:
\begin{equation}
\mathcal{L}_{\text{val}}
= \frac{\sum_t K_t \big[\, -\hat p_t \log \widehat V_t - (1 - \hat p_t)\log(1 - \widehat V_t)\,\big]}{\sum_t K_t}.
\end{equation}
A step estimated with $K{=}16$ therefore receives twice the weight of a step estimated with $K{=}8$. The value probe's predicted \pit{} step applies Definition~\ref{def:pit} to its predicted values at the evaluated prefixes: it is the first evaluated step at which the maximum of the predictions at that and every later evaluated prefix is at or below $\theta$, and a \pit{} whose predictions never meet this condition counts as a miss. The final-label probe is a transformer encoder over the steps with a causal attention mask, a value output and a confidence output, trained on final labels; it predicts the step with the largest drop in its value output $q_t$ from the preceding step, $\arg\min_t\,(q_t - q_{t-1})$. Prior work trains a verifier on final labels to estimate, without sampling, each prefix's probability of reaching a correct answer \citep{lu2024autopsv}.

\paragraph{Distance score.}
The distance score in Figure~\ref{fig:wall} uses the 76 \pit{} traces, from 74 problems, that have at least one correct trace of the same problem, at layers 8 to 36, under four distance measures; it was computed from hidden states of an earlier extraction whose read positions were not checked. The distance score in Table~\ref{tab:support} is computed at the read site, from the stored hidden states that the probes read, on every \pit{} trace with at least one correct trace of the same problem. Its AUROC pools the steps of these traces, and $t/(L-1)$ for step $t$ of a trace with $L$ steps is scored in the same way.

\paragraph{Decodability and behavior.}
In a study of in-context learning on three model families at 7B to 9B parameters, function vectors steered the output in 5 of 72 task and model cells in which the logit lens, a tuned lens and a two-layer probe all failed to decode the answer \citep{nadaf2026steerable}. A representation can therefore affect behavior without being decodable by a tested probe.

\section{Proofs}
\label{app:proofs}

\paragraph{Preliminaries.}
A prefix is a finite sequence of tokens, so $s_t$ takes values in a countable set and every conditional expectation below conditions on an event of the form $\{s_t=s\}$; $h_{1:t}$ is a deterministic function of $s_t$, so the same holds for it.

Strict propriety pins the population minimizer of the loss in Proposition~\ref{prop:detect} pointwise, so that minimizer is $\mathbb{E}[Y\mid h_{1:t}]$ under the law the training set draws from. Reading it as the step's on-policy value needs one further condition, representation sufficiency: $\mathbb{E}[Y\mid s_t]$ has to be a function of $h_{1:t}$, so that the read site keeps whatever the prefix carries about the outcome. This paper does not test that condition, and a last-token read at two fixed layers can discard prefix information that predicts the outcome. Calibration is checked empirically (Table~\ref{tab:conf}); subject to sufficiency, one final label per trace suffices to supervise the detection task in Section~\ref{sec:blind}.

Both directions of Proposition~\ref{thm:nonident}'s equivalence follow from the $L^2$ projection of $V^*_t$ onto $\sigma(s_t)$, the $\sigma$-algebra generated by $s_t$, and the projection is the best square-integrable function of $s_t$, the conditional mean $\mathbb{E}[V^*_t\mid s_t]$; a function of $h_{1:t}$ does no better, since $\sigma(h_{1:t})\subseteq\sigma(s_t)$.

Figure~\ref{fig:anat} shows that $V^*$ immediately before the \pit{} step is low and right-skewed, and its excess over $V^\pi_t$, $m_t$, need not be fixed by the current prefix. The threshold event also depends on the greedy outcome, as Lemma~\ref{lem:posthoc} states.

\begin{proof}[Proof of Proposition~\ref{prop:detect}]
Throughout, $\mathbb{Q}$ is the step-weighted law named in the statement: the training set contains every valid step of every trace, so a trace with $n$ valid steps contributes $n$ rows, and $\mathbb{Q}$ is the trace law reweighted by that row count. The weighting changes the marginal over prefixes and nothing else. A prefix $s_t$ carries its own step index, so each trace whose prefix at that index is $s_t$ contributes exactly one row there; conditioning on $s_t$ therefore returns the trace law $\pi$ given that prefix, and $\mathbb{E}_{\mathbb{Q}}[Y\mid s_t]=\mathbb{E}_{\pi}[Y\mid s_t]=V^\pi_t$. Conditioning on $h_{1:t}$ instead gives the corresponding hidden-state conditional mean under the step-weighted law.

Fix a value $h$ of the hidden prefix and write $q_h=\mathbb{E}_{\mathbb{Q}}[Y\mid h_{1:t}=h]$. Conditional on $h_{1:t}=h$ the label is Bernoulli with mean $q_h$, so the population risk of a candidate $g$ under a scoring rule $\ell$ is
\begin{equation*}
R(g)=\mathbb{E}_{\mathbb{Q}}\big[\,\ell(g(h_{1:t}),Y)\,\big]
     =\mathbb{E}_{h\sim\mathbb{Q}}\Big[\;\mathbb{E}_{Y\sim\mathrm{Bern}(q_h)}\big[\ell(g(h),Y)\big]\;\Big].
\end{equation*}
The minimization runs over measurable functions, which place no constraint tying the value of $g$ at one $h$ to its value at another, so $R$ is minimized by minimizing the inner expectation separately at each $h$.

For the log loss the excess conditional risk at $h$ is
\begin{equation*}
-q_h\log\frac{g(h)}{q_h}-(1-q_h)\log\frac{1-g(h)}{1-q_h}
  =D_{\mathrm{KL}}\big(\mathrm{Bern}(q_h)\,\big\|\,\mathrm{Bern}(g(h))\big)\;\ge\;0 ,
\end{equation*}
which is zero only at $g(h)=q_h$. The Brier rule is strictly proper as well and has the same minimizer, its excess conditional risk being $(g(h)-q_h)^2$. Both are instances of a strictly proper scoring rule pinning the conditional minimizer at the conditional mean \citep{gneiting2007proper}. The conditional minimizer equals $q_h$ at every $h$, so the risk minimizer is $g^\star(h_{1:t})=\mathbb{E}_{\mathbb Q}[Y\mid h_{1:t}]=\Pr_{\mathbb Q}(Y{=}1\mid h_{1:t})$.

The hidden states are a deterministic function of the prefix, so $\sigma(h_{1:t})\subseteq\sigma(s_t)$, and the tower property gives $g^\star(h_{1:t})=\mathbb{E}_{\mathbb Q}\big[\,\mathbb{E}_{\mathbb Q}[Y\mid s_t]\,\big|\,h_{1:t}\big]=\mathbb{E}_{\mathbb Q}[V^\pi_t\mid h_{1:t}]$, so $g^\star(h_{1:t})=V^\pi_t$ almost surely if and only if $\mathbb{E}[Y\mid s_t]$ is $\sigma(h_{1:t})$-measurable up to null sets, which is the representation-sufficiency condition.
\end{proof}

\begin{proof}[Proof of Proposition~\ref{thm:nonident}]
Write $p=V^\pi_t$, which is a function of $s_t$, and recall $m_t=V^*_t-V^\pi_t\ge 0$.

The projection identity below holds for every $\sigma(s_t)$-measurable $f\in L^2$, since $V^*_t\in[0,1]$; the projection of $V^*_t$ onto $\sigma(s_t)$ splits the squared error into the part no such $f$ can remove and the part that measures the distance of $f$ from the conditional mean \citep[Ch.~9]{williams1991martingales}:
\begin{equation*}
\mathbb{E}\big[(V^*_t-f(s_t))^2\big]
 =\underbrace{\mathbb{E}\big[\mathrm{Var}(V^*_t\mid s_t)\big]}_{\text{irreducible}}
 +\underbrace{\mathbb{E}\big[(\mathbb{E}[V^*_t\mid s_t]-f(s_t))^2\big]}_{\text{choice of }f} .
\end{equation*}
If $\mathrm{Var}(V^*_t\mid s_t)=0$ almost surely, take $f=\mathbb{E}[V^*_t\mid s_t]$: both terms vanish, the left side is zero, and so $f(s_t)=V^*_t$ almost surely. Conversely, if some $f$ makes the left side zero, then both non-negative terms vanish; $\mathbb{E}[\mathrm{Var}(V^*_t\mid s_t)]=0$ with a non-negative integrand gives $\mathrm{Var}(V^*_t\mid s_t)=0$ almost surely.

The projection uses only the measurability of $f$ with respect to $\sigma(s_t)$. A training label and loss select one function from that class but do not add information. The squared-loss optimum over functions of the hidden prefix, where $\sigma(h_{1:t})\subseteq\sigma(s_t)$, is the projection $\mathbb{E}[V^*_t\mid h_{1:t}]$, with the same or greater irreducible error.

The same equivalence holds for the discrete $\tpit$: it takes values in $\mathbb N_0\cup\{\infty\}$, so the argument runs through its conditional law. If $\tpit=f(s_t)$ almost surely for a $\sigma(s_t)$-measurable $f$, then conditionally on $s_t$ the variable $\tpit$ equals the constant $f(s_t)$ almost surely, so its conditional law is the point mass at $f(s_t)$. Conversely, if that conditional law is the point mass at some $c(s_t)$, then $\Pr(\tpit=c(s_t)\mid s_t)=1$ almost surely, and taking expectations gives $\tpit=c(s_t)$ almost surely, and countability makes the unique-atom selector $c$ measurable.

Subtracting $V^\pi_t$ does not change conditional variance, because $V^\pi_t$ is $\sigma(s_t)$-measurable: $\mathrm{Var}(m_t\mid s_t)=\mathrm{Var}(V^*_t\mid s_t)$. The condition $0<V^\pi_t<1$ by itself does not imply that either variance is positive. For example, suppose every future reasoning prefix reachable from $s_t$ has the same value $p\in(0,1)$ until the final answer is sampled. Then a fresh continuation outcome remains random, while $V^*_t=p$ and $m_t=0$ are fixed by $s_t$. Positive conditional variance instead requires continuation paths of positive probability that produce different future maxima. In that case, the first term in the projection identity is the irreducible mean-squared error of every prefix-state predictor. A hidden-state predictor can have additional error because $h_{1:t}$ may discard information from $s_t$. Thus the proposition gives a conditional obstruction to exact recovery. The size of that obstruction depends on $\mathrm{Var}(V^*_t\mid s_t)$; $V^\pi_t$ alone does not determine it.
\end{proof}

\begin{lemma}[The completed path determines the \pit{}]
\label{lem:posthoc}
Fix the inclusive crossing test $V^*_t\le\theta$ of Definition~\ref{def:pit}. \textbf{(i)} Let $V^\pi_0,\dots,V^\pi_T$ be the exact value path over the recorded reasoning prefixes used to compute the envelope for a completed incorrect trace, and let $L=\max\{u\in\{0,\dots,T\}:V^\pi_u>\theta\}$ with $\max\emptyset=-1$. Then $\tpit=0$ (capability gap) if $L=-1$, $\tpit=L+1$ (\pit{}) if $-1<L<T$, and $\tpit=\infty$ (near miss) if $L=T$. \textbf{(ii)} Let $G_t\in\{0,1\}$ be the greedy-continuation outcome and $X_t\sim\mathrm{Bin}(K-1,p_t)$ count successes among the independent temperature-$0.7$ rollouts. Then $\hat p_t=(G_t+X_t)/K$ and $\hat p_t\le\theta\iff X_t\le\lfloor\theta K\rfloor-G_t$.
\end{lemma}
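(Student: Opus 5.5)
Part (i) is a direct computation from the definition of the envelope in Definition~\ref{def:pit}, with $V^*_t=\max_{t'\ge t}V^\pi_{t'}$ taken over the recorded prefixes $0,\dots,T$. First I will show that $V^*_t>\theta$ holds exactly when $t\le L$. If $t\le L$, then $L$ is one of the indices in the maximum, so $V^*_t\ge V^\pi_L>\theta$. If $t>L$, then every index $t'\ge t$ exceeds $L$, so $V^\pi_{t'}\le\theta$ by the maximality of $L$, and therefore $V^*_t\le\theta$. Hence $\{t\in\{0,\dots,T\}:V^*_t\le\theta\}=\{L+1,\dots,T\}$, and its minimum is $\tpit$.

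Reading off the three cases gives the classification. If $L=-1$, the set is all of $\{0,\dots,T\}$ and $\tpit=0$. If $-1<L<T$, the set is nonempty and starts at $L+1\in\{1,\dots,T\}$, so the trace is a \pit{} with $\tpit=L+1$. If $L=T$, the set is empty, so $\tpit=\infty$ and the trace is a near miss, matching the convention in Definition~\ref{def:pit}. The only subtlety is bookkeeping. The indices must be the recorded (evaluated) prefixes that the collections use. If those are not consecutive, I relabel them $0,\dots,T$, and the same argument gives the \pit{} step as the evaluated prefix following the last one above $\theta$.

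Part (ii) rests on the protocol description in Appendix~\ref{app:proto}. Each set of $K$ rollouts contains one greedy rollout with outcome $G_t$ and $K-1$ sampled rollouts whose success count is $X_t$. The total number of successes is therefore $G_t+X_t$, and $\hat p_t=(G_t+X_t)/K$ follows from Equation~\ref{eq:value}. The condition $\hat p_t\le\theta$ is equivalent to $X_t\le\theta K-G_t$. Because $X_t$ and $G_t$ are integers, this is in turn equivalent to $X_t\le\lfloor\theta K-G_t\rfloor=\lfloor\theta K\rfloor-G_t$, using $\lfloor a-n\rfloor=\lfloor a\rfloor-n$ for integer $n$.

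The only step that needs care is this floor manipulation together with the integrality of the success counts. The rest of the argument is routine.
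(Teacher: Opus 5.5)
Your proof is correct and follows the paper's argument. In (i) the paper does the same case analysis on $L$, which you make more explicit by first showing $\{t: V^*_t\le\theta\}=\{L+1,\dots,T\}$. In (ii) it likewise multiplies $\hat p_t\le\theta$ through by $K$ and uses that the success counts are integers.
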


\begin{proof}
(i) If $L=-1$, every value is at most $\theta$, so the envelope crosses at step $0$. If $-1<L<T$, the envelope exceeds $\theta$ through $L$ and is at most $\theta$ afterward, so the first crossing is $L+1$. If $L=T$, every candidate crossing has a later value above $\theta$, so the envelope never crosses and $\tpit=\infty$. The three cases of $L$ correspond one-to-one to the capability-gap, \pit{} and near-miss cases of Definition~\ref{def:pit}. Under the strict test $V^*_t<\theta$, the same cases hold with $L$ taken over $\{u:V^\pi_u\ge\theta\}$.

(ii) The total count is $G_t+X_t$, so the threshold equivalence follows by multiplying by $K$ and using that $X_t$ is integer-valued. At $\theta=0.1$, a $K=8$ crossing requires $G_t=0$ and $X_t=0$. At $K=16$, it requires $X_t\le1$ when $G_t=0$ and $X_t=0$ when $G_t=1$. Thus the mixed rollout design changes both the threshold grid and the sampling law of the success count across depths.
\end{proof}

\section{Read-site selection}
\label{app:repr}

The confidence, value, combined and pointer probes read hidden states at the read site $\mathcal{S}^\star=\{23,24\}$. We select the read site by the detection AUROC of a probe at each layer, using the final label $Y$ only. The selection measure is the detection AUROC in the middle step bin, computed within each difficulty group and then averaged, with a probe of the confidence probe's architecture trained on the hidden states of each layer. The middle bin reduces both the difficulty prior present in early steps and the answer commitment present in the final step. Figure~\ref{fig:layers}A also shows the same measure for transcoder activations, obtained by applying the published per-layer transcoders to the residual stream before the layer normalization that precedes the MLP, with their activations projected to the width of the hidden states by a fixed random matrix. The transcoders were trained on the output of that normalization, so the transcoder curves do not describe the transcoders at their training site. No result in this paper uses the transcoder curves. We also record predictive $\mathcal{V}$-information \citep{xu2020vinformation,ethayarajh2021vusable}; predictive $\mathcal{V}$-information peaks at layer 28 at 4B and at layer 18 at 8B and was not used for the selection. We record a usable-information measure because raw mutual information cannot order deterministic layers \citep{pimentel2020probingmi,xu2020vinformation}.

\paragraph{Layer selection.}
The detection AUROC forms a broad upper-middle-layer plateau and peaks at layer 23 for both model scales, at $0.7264$ $[0.7079, 0.7445]$ for 4B and $0.7124$ $[0.6933, 0.7295]$ for 8B (Figure~\ref{fig:layers}A); the interval at the peak overlaps those of most neighbouring layers. The read site $\{23,24\}$ is not distinguishable from layer 23 alone because the four bootstrap intervals, the two at layer 23 above and the two read-site intervals in Table~\ref{tab:support}, overlap. This selection uses one measurement per scale with no seed spread, and the two scales, selected separately, peak at the same layer. The upper-middle location is consistent with prior truth and correctness probes \citep{azaria2023saplma,marks2023geometryoftruth,burns2022ccs}. Prior work also reports that the best probe layer tends to be deeper in larger models \citep{oneill2025singledirection}, so we repeat the selection at each scale. Conditional probing measures only the information added beyond a baseline such as the first layer \citep{hewitt2021conditionalprobing}, and adjacent residual-stream layers are expected to be redundant. The layer-count sweep tests 2, 3, 5 and 8 adjacent layers centred on the peak. Two layers is the smallest tested layer count whose interval overlaps the interval of the best-scoring layer count, and adding a third changes the score by less than the interval width at both scales (Figure~\ref{fig:layers}B).

\begin{figure}[htbp]
\centering
\includegraphics[width=\textwidth]{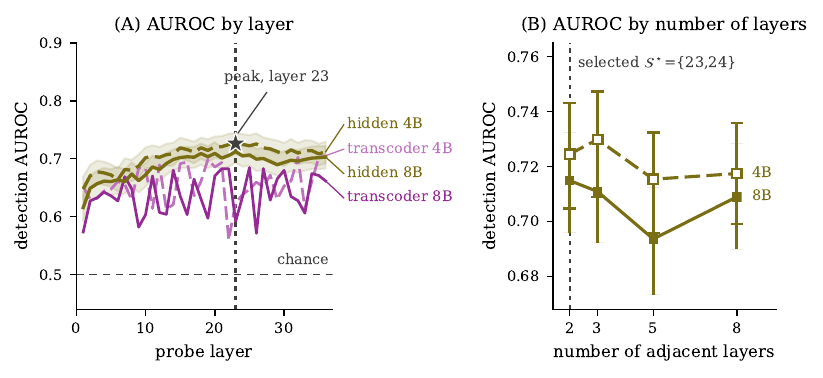}
\caption{Read-site selection for Qwen3-4B and Qwen3-8B. \textbf{(A)} Detection AUROC of a probe at each layer, on hidden states and on transcoder activations (not used by any result in this paper; see the text of this appendix); on hidden states it forms a broad upper-middle-layer plateau and peaks at layer 23. \textbf{(B)} Detection AUROC of a probe on 2, 3, 5 and 8 adjacent layers centred on layer 23; the gain from a third layer over two is smaller than the interval width. Bands and bars are 95\% bootstrap intervals that resample held-out traces. We select hidden states from $\mathcal{S}^\star=\{23,24\}$.}
\label{fig:layers}
\end{figure}

\paragraph{Read position.}
Each read lies at or before the step's last token. The stored step positions come from tokenizing each step on its own, and the hidden states were extracted in left-padded batches without shifting those positions. Each read therefore lies earlier than the step's last token by the padding width of its trace, at most 512 tokens. For the final step, the read lies inside the intended step for $51.0\%$ of traces at 4B and $51.6\%$ at 8B. Tokenizing the steps separately adds a smaller drift, with a median of 3 tokens over 820 step reads in a 40-trace 4B sample. No result was re-scored with corrected positions.

\section{Failure types by problem type}
\label{app:types}

Table~\ref{tab:types} reports the capability-gap share of incorrect traces for each problem type in the broad collection. A problem's type is its DeepMath topic at the second level of that dataset's hierarchy, or its Hendrycks MATH subject. The association between failure type and problem type is weak at both scales, with Cram\'er's $V$ of $0.095$ at 4B and $0.101$ at 8B over traces with the stored labels, and $0.099$ and $0.110$ with corrected labels (Appendix~\ref{app:proto}). The DeepMath topics span a narrow range of capability-gap shares, with Discrete Mathematics above the pooled share and Calculus and Number Theory below it at both scales. The Hendrycks MATH subjects have higher capability-gap shares. The two sources also differ in difficulty scale, so this table does not separate source from topic, and the pooled $V$ carries the same limitation.

\begin{table}[htbp]
\centering\small
\setlength{\tabcolsep}{5pt}
\caption{Capability-gap share of incorrect traces by problem type in the broad collection at both model scales, with the stored labels and $95\%$ problem-clustered bootstrap intervals. The pooled shares are $0.5555$ at 4B and $0.5321$ at 8B. DeepMath topics and Hendrycks MATH subjects are listed separately because the two sources differ in difficulty scale.}
\label{tab:types}
\begin{tabular}{lcccc}
\toprule
problem type & 4B problems & 4B share & 8B problems & 8B share \\
\midrule
\multicolumn{5}{l}{\textit{DeepMath topics}} \\
Algebra & 4{,}136 & 0.563 [0.549, 0.577] & 3{,}728 & 0.537 [0.522, 0.552] \\
Calculus & 2{,}564 & 0.511 [0.494, 0.530] & 2{,}256 & 0.477 [0.457, 0.497] \\
Other & 928 & 0.542 [0.513, 0.572] & 826 & 0.510 [0.478, 0.541] \\
Discrete Mathematics & 890 & 0.603 [0.573, 0.632] & 801 & 0.599 [0.564, 0.627] \\
Geometry & 856 & 0.555 [0.524, 0.584] & 769 & 0.535 [0.506, 0.569] \\
Applied Mathematics & 586 & 0.561 [0.523, 0.596] & 524 & 0.533 [0.491, 0.571] \\
Number Theory & 424 & 0.495 [0.447, 0.538] & 405 & 0.469 [0.424, 0.513] \\
Differential Equations & 239 & 0.512 [0.448, 0.570] & 195 & 0.487 [0.418, 0.549] \\
Precalculus & 178 & 0.483 [0.406, 0.554] & 154 & 0.488 [0.408, 0.566] \\
\midrule
\multicolumn{5}{l}{\textit{Hendrycks MATH subjects}} \\
geometry & 163 & 0.772 [0.722, 0.818] & 159 & 0.789 [0.737, 0.839] \\
intermediate algebra & 84 & 0.566 [0.454, 0.670] & 83 & 0.635 [0.534, 0.733] \\
counting and probability & 78 & 0.810 [0.734, 0.876] & 64 & 0.803 [0.725, 0.872] \\
precalculus & 68 & 0.587 [0.458, 0.693] & 56 & 0.561 [0.443, 0.671] \\
number theory & 60 & 0.641 [0.532, 0.745] & 57 & 0.655 [0.529, 0.757] \\
prealgebra & 38 & 0.768 [0.660, 0.859] & 30 & 0.706 [0.560, 0.844] \\
algebra & 18 & 0.880 [0.756, 0.967] & 16 & 0.682 [0.488, 0.837] \\
\bottomrule
\end{tabular}
\end{table}

\clearpage
\section{Supporting tables}
\label{app:tables}

\begin{table}[htbp]
\centering\small
\setlength{\tabcolsep}{5pt}
\caption{Sizes of the trace sets and of the populations drawn from them, and the use of each population. The broad collection labels every incorrect trace at both model scales. The gold collection is a 4B collection with half as many rollouts per evaluated prefix, and the \pit{} steps of its \pit{}s are the reference \pit{} steps. A later table reports a separate $n$ when its evaluation population differs from the corresponding row here.}
\label{tab:sets}
\begin{tabular}{@{}l>{\raggedright\arraybackslash}p{3.3cm}>{\raggedright\arraybackslash}p{3.2cm}cc@{}}
\toprule
set & population & used for & 4B & 8B \\
\midrule
all traces & every sampled trace (35{,}854 problems) & training the confidence and combined probes & 144{,}000 & 144{,}000 \\
\quad held-out traces & traces whose problems were held out by at least one of the five training seeds & detection and calibration of the confidence probe & 119{,}736 & 119{,}756 \\
broad collection & every incorrect trace & shares of the failure types (Figure~\ref{fig:anat}) & 28{,}699 & 24{,}871 \\
\quad held-out incorrect traces & incorrect traces among the held-out traces & miss rates by failure type (Section~\ref{sec:blind}) & 23{,}837 & 20{,}590 \\
\quad held-out \pit{}s & \pit{}s of problems outside the training split & pointer probe (4B); value probe, per seed (8B) & 2{,}289 & 2{,}805--2{,}918 \\
\quad classifier problems & one incorrect trace per problem, capability gaps and \pit{}s & failure-type classifier (Appendix~\ref{app:presence}) & 1{,}722 & 6{,}372 \\
gold collection & every trace of the collection & reference \pit{} steps & 5{,}380 & \textit{not collected} \\
\quad \pit{}s & reference \pit{} steps & pointer probe trained without gold problems; final-label comparison & 818 & \textit{not collected} \\
\bottomrule
\end{tabular}
\end{table}

\begin{table}[htbp]
\centering\scriptsize
\setlength{\tabcolsep}{4pt}
\caption{Supporting measurements cited in the main text, at both model scales. Intervals are 95\% bootstrap intervals that resample problems unless a note says otherwise, and $\pm$ values are sample standard deviations over seeds (seed axes in Appendix~\ref{app:proto}).}
\label{tab:support}
\begin{tabular}{@{}>{\raggedright\arraybackslash}p{3.9cm}>{\raggedright\arraybackslash}p{3.4cm}>{\raggedright\arraybackslash}p{3.4cm}>{\raggedright\arraybackslash}p{2.5cm}@{}}
\toprule
measurement & 4B & 8B & interpretation \\
\midrule
\multicolumn{4}{@{}l}{\textit{Failure types and detection}} \\
detection AUROC of the read site\textsuperscript{a} & \mbox{0.7245 [0.7047, 0.7432]} & \mbox{0.7149 [0.6960, 0.7326]} & read site chosen from the final label only \\
share of \pit{}s whose \pit{} step equals $\arg\min_t \hat V_t$ & 0.154 & 0.125 & the $\arg\min$ step equals the \pit{} step for a minority of \pit{}s \\
combined probe $-$ confidence probe, detection AUROC on held-out problems & \mbox{$-0.026$ [$-0.029$, $-0.022$]} & \mbox{$-0.025$ [$-0.029$, $-0.021$]} & the confidence probe has the higher detection AUROC \\
confidence probe's predicted correctness at the \pit{} step minus other steps\textsuperscript{b} & \mbox{$-0.0206$ [$-0.0272$, $-0.0140$]} & \mbox{$-0.0351$ [$-0.0372$, $-0.0327$]} & lower at the \pit{} step \\
same difference against the position-matched steps\textsuperscript{b} & \mbox{$-0.0018$ [$-0.0055$, $+0.0022$]} & \mbox{$-0.0051$ [$-0.0062$, $-0.0040$]} & 8B: the decrease remains; 4B: no detected difference \\
\midrule
\multicolumn{4}{@{}l}{\textit{Localization}} \\
final-label comparison, value probe $-$ final-label probe, pooled over all reference \pit{} steps (two input sets)\textsuperscript{c} & \mbox{$+0.0609$ [$+0.0293$, $+0.0929$]}; \mbox{$+0.0433$ [$+0.0137$, $+0.0731$]} & \textit{not evaluated} & pooled evaluation includes training traces of the value probe; on the problems held out from its training the interval excludes zero in 1 of 5 seeds (first input set) and 0 of 5 (second) \\
value probe, AUROC of near misses against \pit{}s by its prediction at the last evaluated step & $0.9148 \pm 0.0054$ & $0.9062 \pm 0.0079$ & separates near misses from \pit{}s \\
value probe, per-step absolute error versus predicting zero and a monotone fit\textsuperscript{e} & $0.1227 \pm 0.0052$ versus $0.1040 \pm 0.0019$ and $0.0830 \pm 0.0014$ & $0.1261 \pm 0.0040$ versus $0.1091 \pm 0.0034$ and $0.0962 \pm 0.0024$ & predicting zero and the monotone fit each have lower error than the value probe in every seed \\
value probe versus length rule, Hit@$\pm2$ on each seed's held-out \pit{}s, two input sets\textsuperscript{d} & Table~\ref{tab:paired} & $0.4293 \pm 0.0223$ and $0.4046 \pm 0.0162$ versus $0.5028 \pm 0.0108$; 0 of 5 seeds each & below the length rule at 8B \\
\pit{}s located within two steps by exactly one of the pointer probe and the length rule, 10 runs & 723 to 857 of 2{,}289 \pit{}s (31.6 to 37.4\%) & \textit{not evaluated} & about a third of \pit{}s in each run \\
distance score at the read site versus $t/(L-1)$, AUROC of the \pit{} step against the other steps\textsuperscript{f} & 0.55 versus 0.63 (gold); 0.55 versus 0.69 (broad) & 0.56 versus 0.74 (broad) & lower than $t/(L-1)$ in each collection and at each scale \\
rollout search replay, reference \pit{} step within two steps at $6.6$ evaluated prefixes per trace & 0.804 & \textit{not evaluated} & reads the estimates that define the labels; not comparable with a probe's Hit@$\pm2$ \\
\quad refinement-depth agreement & 0.754 (0.638 at $K{=}8$) & \textit{not evaluated} & point estimates against a deeper reference; no interval \\
\bottomrule
\end{tabular}
\par\smallskip\begin{minipage}{\linewidth}\footnotesize\textit{Notes.} \textsuperscript{a}Intervals resample held-out traces. \textsuperscript{b}Each trace uses only the seeds that held it out, and \pit{} steps at the first or last step of a trace are excluded; the rows cover 682 reference \pit{} steps at 4B and 7{,}277 \pit{}s of the broad collection at 8B, so the two scales are not a matched comparison. \textsuperscript{c}The final-label comparison is evaluated only at 4B; intervals of the pooled evaluation resample traces. \textsuperscript{d}At 8B, the row compares seed means on 2{,}805 to 2{,}918 held-out \pit{}s of the broad collection per seed and has no per-trace interval. The value-probe rows use the first input set except where two input sets are named. \textsuperscript{e}The monotone fit is the best non-increasing least-squares fit to each trace's rollout rates. \textsuperscript{f}The distance score takes the highest of four distance measures; the comparison scores step $t$ of a trace with $L$ steps by $t/(L-1)$. The row uses the \pit{} traces with at least one correct trace of the same problem: 593 in the gold collection, 7{,}987 in the broad collection at 4B and 6{,}881 at 8B.\end{minipage}
\end{table}

\begin{table}[htbp]
\centering\small
\setlength{\tabcolsep}{5pt}
\caption{Values behind Figure~\ref{fig:conceal}: confidence-probe predictions at the final step, on the broad collection at both model scales, on held-out problems of the problem split. An incorrect trace is missed when its predicted correctness is at least the cutoff, $0.5$. The binned rows give the miss rates of \pit{}s and near misses within bins of the best reachable value at step 0, $V^*_0$, with the number of traces in each bin in parentheses. The AUROC rows score each probe by its final-step score on its own held-out traces. The last block is the difference between \pit{}s and near misses after controls, from a logistic model of trace misses on failure type, trace length, answer position, difficulty and $V^*_0$ (Appendix~\ref{app:proto}). Intervals are 95\% problem-clustered bootstraps.}
\label{tab:conceal}
\begin{tabular}{@{}p{4.3cm}cc@{}}
\toprule
quantity & 4B & 8B \\
\midrule
miss rate, capability gap & \mbox{$0.717\ [0.706, 0.729]$} & \mbox{$0.797\ [0.785, 0.808]$} \\
miss rate, \pit{} & \mbox{$0.714\ [0.703, 0.724]$} & \mbox{$0.815\ [0.805, 0.824]$} \\
miss rate, near miss & \mbox{$0.490\ [0.460, 0.521]$} & \mbox{$0.466\ [0.442, 0.490]$} \\
$n$ (incorrect traces) & 23{,}837 & 20{,}590 \\
\midrule
$V^*_0\le0.25$, \pit{}s & \mbox{$0.707 [0.690, 0.723]$ (3{,}160)} & \mbox{$0.787 [0.770, 0.804]$ (2{,}548)} \\
\quad near misses & \mbox{$0.436 [0.329, 0.547]$ (78)} & \mbox{$0.449 [0.357, 0.549]$ (118)} \\
$0.25<V^*_0\le0.50$, \pit{}s & \mbox{$0.720 [0.701, 0.737]$ (2{,}931)} & \mbox{$0.834 [0.818, 0.849]$ (2{,}415)} \\
\quad near misses & \mbox{$0.424 [0.357, 0.491]$ (205)} & \mbox{$0.468 [0.410, 0.526]$ (299)} \\
$0.50<V^*_0\le0.75$, \pit{}s & \mbox{$0.722 [0.699, 0.745]$ (1{,}591)} & \mbox{$0.822 [0.801, 0.841]$ (1{,}359)} \\
\quad near misses & \mbox{$0.481 [0.410, 0.548]$ (216)} & \mbox{$0.430 [0.371, 0.487]$ (323)} \\
$0.75<V^*_0$, \pit{}s & \mbox{$0.709 [0.687, 0.734]$ (1{,}682)} & \mbox{$0.823 [0.803, 0.843]$ (1{,}568)} \\
\quad near misses & \mbox{$0.517 [0.479, 0.556]$ (722)} & \mbox{$0.479 [0.449, 0.511]$ (1{,}047)} \\
\midrule
AUROC of \pit{}s against near misses, confidence probe & \mbox{$0.670\ [0.653, 0.687]$} & \mbox{$0.760\ [0.746, 0.773]$} \\
\quad $n$ (\pit{}s; near misses) & 9{,}364; 1{,}221 & 7{,}890; 1{,}787 \\
AUROC of \pit{}s against near misses, combined probe & \mbox{$0.364\ [0.349, 0.378]$} & \mbox{$0.403\ [0.390, 0.416]$} \\
\quad $n$ (\pit{}s; near misses) & 9{,}344; 1{,}254 & 7{,}903; 1{,}805 \\
\midrule
\pit{}s $-$ near misses, after controls & $+0.0372\ [+0.0116, +0.0641]$ & $+0.1246\ [+0.1043, +0.1428]$ \\
$n$ (traces in the logistic model)\textsuperscript{a} & 23{,}820 & 20{,}590 \\
\pit{}s $-$ near misses, after controls, Brier loss & $+0.052\ [+0.028, +0.076]$ & $+0.123\ [+0.103, +0.143]$ \\
\bottomrule
\end{tabular}
\par\smallskip\begin{minipage}{\linewidth}\footnotesize\textit{Notes.} \textsuperscript{a}The model omits 17 traces at 4B that contain no final answer, for which answer position is undefined.\end{minipage}
\end{table}

\begin{table}[htbp]
\centering\small
\scriptsize \setlength{\tabcolsep}{2pt}
\caption{Paired comparisons with the length rule, at 4B throughout. Each comparison scores the probe and the length rule on the same \pit{}s; intervals are 95\% problem-clustered bootstraps of the per-trace difference where given.}
\label{tab:paired}
\begin{tabular}{@{}>{\raggedright\arraybackslash}p{3.6cm}ccc>{\raggedright\arraybackslash}p{2.6cm}c@{}}
\toprule
comparison & \pit{}s & probe & length rule & difference & runs above rule \\
\midrule
pointer probe, held-out \pit{}s of the broad collection\textsuperscript{a} & 2{,}289 & $0.4632 \pm 0.0134$ & 0.4692 & $-0.0060$ \mbox{[$-0.0260$, $+0.0141$]} & 0 of 10 \\
pointer probe trained without gold problems & 818 & $0.4293 \pm 0.0120$ & 0.4462 & $-0.0169$ \mbox{[$-0.0473$, $+0.0131$]} & 0 of 5 \\
final-label comparison, held-out: value probe\textsuperscript{b} & 205--242 & 0.3640; 0.3402 & 0.4812 & $-0.166$ to $-0.084$; $-0.182$ to $-0.113$ & 0 of 10 \\
final-label comparison, held-out: final-label probe\textsuperscript{b} & 205--242 & 0.3041; 0.3423 & 0.4812 & $-0.216$ to $-0.124$; $-0.170$ to $-0.116$ & 0 of 10 \\
final-label comparison, pooled, training traces included\textsuperscript{c} & 818 & 0.3186--0.3795 & 0.4462 & $-0.128$ to $-0.067$ & 0 of 20 \\
\bottomrule
\end{tabular}
\par\smallskip\begin{minipage}{\linewidth}\footnotesize\textit{Notes.} All retraining uses seeds $\{42,137,271,314,577\}$. \textsuperscript{a}Ten runs, five seeds under each of two weightings of the failure-type loss; the probe and difference cells report the inverse-frequency weights, and the uniform weights score $0.4524 \pm 0.0111$. The ensemble, the average of the five pointer probes of one weighting, exceeds the length rule by $+0.0262\,[+0.0022,+0.0495]$ with inverse-frequency weights and by $+0.0166\,[-0.0065,+0.0400]$, an interval that contains zero, with uniform weights. \textsuperscript{b}Each seed is scored on the reference \pit{} steps of the problems held out from the value probe's training for that seed; the difference column gives the range of the five per-seed differences, each with an interval below zero. The two values in a cell are the two input sets. \textsuperscript{c}Every seed is scored on all reference \pit{} steps, which include that seed's training traces; every per-seed interval in this row lies below zero.\end{minipage}
\end{table}

\begin{table}[htbp]
\centering\small
\setlength{\tabcolsep}{4pt}
\caption{Detection by the confidence probe at the final step and comparison with the text baseline, on held-out problems of the problem split at both model scales. The upper block reports the mean $\pm$ standard deviation over five seeds. AUROC after controls scores the same five-seed predictions after the five control variables of Appendix~\ref{app:proto} are removed from them. Expected calibration error uses ten equal-width bins on held-out scores, before and after temperature scaling. The lower block gives the held-out detection AUROC of the text baseline and of a logistic regression on the same features, the paired difference between the confidence probe and the text baseline, and the gain from adding the confidence probe's score to the text baseline's features; the lower block's intervals are 95\% problem-clustered bootstraps. Appendix~\ref{app:proto} reports the trace split as a leakage comparison.}
\label{tab:conf}
\begin{tabular}{@{}llcccc@{}}
\toprule
scale & loss & AUROC & AUROC after controls & ECE & ECE after scaling \\
\midrule
4B & BCE (main) & $0.8136 \pm 0.0049$ & $0.7291 \pm 0.0042$ & $0.0143 \pm 0.0040$ & $0.0103$ \\
8B & BCE (main) & $0.8088 \pm 0.0084$ & $0.7221 \pm 0.0149$ & $0.0243 \pm 0.0111$ & $0.0092$ \\
4B & Brier (robustness) & $0.8107 \pm 0.0046$ & $0.7297 \pm 0.0036$ & $0.0273 \pm 0.0103$ & $0.0215$ \\
8B & Brier (robustness) & $0.8085 \pm 0.0047$ & $0.7345 \pm 0.0029$ & $0.0229 \pm 0.0061$ & $0.0107$ \\
\end{tabular}

\medskip
\begin{tabular}{@{}p{5.4cm}cc@{}}
\toprule
& 4B & 8B \\
\midrule
text baseline (gradient-boosted), AUROC & $0.7783$ & $0.7725$ \\
logistic regression on the same features, AUROC & \mbox{$0.7548\ [0.7488, 0.7609]$} & \mbox{$0.7379\ [0.7312, 0.7442]$} \\
confidence probe $-$ text baseline, BCE & \mbox{$+0.0353\ [+0.0319, +0.0387]$} & \mbox{$+0.0362\ [+0.0324, +0.0403]$} \\
confidence probe $-$ text baseline, Brier & \mbox{$+0.0324\ [+0.0290, +0.0357]$} & \mbox{$+0.0359\ [+0.0323, +0.0397]$} \\
gain from adding the probe's score to the text baseline, BCE & \mbox{$+0.0369\ [+0.0335, +0.0402]$} & \mbox{$+0.0406\ [+0.0369, +0.0445]$} \\
gain from adding the probe's score to the text baseline, Brier & \mbox{$+0.0338\ [+0.0305, +0.0370]$} & \mbox{$+0.0393\ [+0.0357, +0.0430]$} \\
\bottomrule
\end{tabular}
\par\smallskip\begin{minipage}{\linewidth}\footnotesize\textit{Notes.} The confidence probe's seed identities are not recorded, so the overlap between its seeds and the value probe's seeds is unknown.\end{minipage}
\end{table}

\end{document}